\theoremstyle{definition}
\newtheorem{theorem}{Result}
\theoremstyle{definition}
\newtheorem{conjecture}{Conjecture}
\begin{document}
\title{The Eigenvalues Entropy as a Classifier Evaluation Measure}
\author{Doulaye Demb\'el\'e}
\date{Institut de G\'en\'etique et de Biologie Moleculaire et
  Cellulaire\\
  CNRS UMR7104, INSERM U964 and Universit\'e de Strasbourg\\
  67400 Illkirch, France~}
\maketitle
\begin{abstract}
Classification is a machine learning method used in many
practical applications: text mining, handwritten character recognition,
face recognition, pattern classification, scene labeling, computer vision,
natural langage processing.
A classifier prediction results and training set information
are often used to get a contingency table which is used to quantify the
method quality through an evaluation measure. Such measure, 
typically a numerical value, allows to choose a suitable method among 
several. Many evaluation measures available in the literature are less
accurate for a dataset with imbalanced classes.
In this paper, the eigenvalues entropy is used as an evaluation measure
for a binary or a multi-class problem. 
For a binary problem, relations are given between the 
eigenvalues and some commonly used  
measures, the sensitivity, the specificity, the area under the operating
receiver characteristic curve and the Gini index.
A by-product result of this paper is an estimate
of the confusion matrix to deal with the curse of the imbalanced classes.
Various data examples are used to show the better performance of the
proposed evaluation measure over
the gold standard measures available in the literature.
\end{abstract}

\paragraph{Keywords:}
classification, clustering, confusion matrix, evaluation measure, eigenvalues,
entropy.

\section{Introduction}
An evaluation measure allows to quantify the quality of a
classification method, 
that allows to choose one method among many. A classifier is used
to predict the class information of a dataset observations. 
The prediction result and the training set true class information
are then used 
to get a contingency table or a confusion matrix (CM). The diagonal entries 
of this matrix record the correctly predicted observations while the 
off-diagonal entries record those wrongly predicted. A similar matrix 
is obtained when comparing the rating results of two experts or when 
comparing two clustering methods results. 
From two raters results, the class (category) index is associated with each 
observation, 
that leads to an integer-valued CM. A classifier or a 
clustering method consists of a hard or a soft assignment of the observations 
to the classes. The hard assignment is similar to an expert rating and then 
leads to an integer-valued CM. In the soft assignment, membership 
values are associated with each observation to show its proximity with 
the classes/clusters. The membership values associated with an observation 
are very often normalized to have sum one. The index of the maximum 
membership value for each observation can be used to get 
an integer-valued CM. Another solution consists of a direct 
use of the membership values to get a real-valued CM. 
For a good classifier or for high agreement between two experts, two 
clustering methods using the same number of clusters, the diagonal entries 
of the CM are larger than the off diagonal entries.

Many evaluation measures have been proposed in the literature: the 
sensitivity (recall), the specificity (inverse recall), the precision 
(confidence), the accuracy (Rand index), the Jaccard index, the F1-score, 
the receiver operating characteristic (ROC), the area under the 
ROC curve (AUC), Matthews' correlation coefficient or Pearson correlation, 
\ldots, \cite{Cramer-1946,Cohen-1960,Matthews-1975,Fowlkes-al-1983,
Jain-al-1988,Baldi-al-2000,Gorodkin-2004,Ferri-al-2009,Flach-2019}.
The binary classification problem received more attention and a multi-class 
problem is sometime converted into one or many binary problems. In the 
conversion method leading to a single binary problem, the pairs counting of
the observations that are (are not) assigned into the same class according 
to their true and prediction status are used,
\cite{Jain-al-1988,Albatineh-al-2006,Lei-al-2017,Gosgens-al-2021}. 
In the second conversion approach, each class-versus the rest analyses are 
done, and an average of the obtained measures is often reported.
For the multi-class 
performance score method in \cite{Kautz-al-2017}, the sensitivity, 
the specificity and the accuracy measures are extended through a
Bayes statistical 
test involving a binomial distribution.
There are some multi-class problem evaluation measures based on a direct
use of the confusion matrix entries:
the Cramer's correlation coefficient, \cite{Cramer-1946}, the
Matthews correlation coefficient \cite{Gorodkin-2004} and the Cohen's
kappa, \cite{Cohen-1960}. 
Entropy, conditional entropy and mutual entropy obtained from the CM entries
have been used as evaluation measures in 
\cite{Shannon-1948,Kononenko-al-1991,Cover-al-2006,Vinh-al-2010}.
The entropy of the misclassified observations or the confusion 
entropy (CEN), is computed by the method proposed in \cite{Wei-al-2010},
see also \cite{Delgado-al-2019}. 
Some desirable properties for an evaluation measure are provided
in \cite{Gosgens-al-2021}.

An evaluation measure is typically a numerical value which varies in an 
interval. This numerical value may be sensitive to the imbalanced 
ratio of the classes, \cite{Feinstein-al-1990,Hand-2009,
Andres-al-2004,Luque-al-2019,Powers-2020}. 
The imbalanced ratio quantifies how is the difference 
between the largest and the smallest class size,
$ir=(smallest\ size)/ (largest\ size)$. For a balanced classification 
problem, the imbalanced ratio goes towards $1$. This condition is not 
fulfilled for an imbalanced classification problem where the $ir$ can be 
very small, close to zero. When we compare the prediction results for one 
class against the rest, an imbalanced classification very often occurs.
In the method proposed, 
the confusion matrix entries are first adjusted
for each class and an eigenvalue decomposition method is used to obtain
an evaluation measure which especially varies with the off-diagonal entries
of the confusion matrix. 
The  proposed measure value is one when all off-diagonal entries of the CM
are zero. Then, its value decreases continuously towards zero when the
off-diagonal entries increase to be the same as the diagonal entries
for all classes independently of their sizes (bad prediction results). 
The eigenvalues are computed from a symmetric matrix.
Lower and upper bounds for these eigenvalues are obtained from the 
transformed confusion matrix entries.
Another result presented is an estimate of the CM which can be 
directly used
to deal with the curse of imbalanced classification problem.
The contributions in this paper are given as follows:
\begin{itemize}
\item A new classifier evaluation measure based on eigenvalues entropy is 
	proposed.
\item A lower and an upper bounds for the eigenvalues are provided using
	the confusion matrix entries.
\item For a binary problem, relations between the eigenvalues and the AUC, 
	the sensitivity, the specificity and the Gini index are shown.
\item An estimate of the confusion matrix is provided. This estimate
        leads to an 
	improvement of the measures sensitive to imbalanced classes.
\end{itemize}
In the next section, the notations and some evaluation measures
commonly used are presented. The method proposed and some theoretical
results are presented in section \ref{sect-method}. In section 
\ref{sect-results},
some comparison results are shown and the conclusions appear
in section \ref{sect-conclusion}.

\section{Notations and related work}
We consider a training set containing $m$
observations and $n$ classes, $n=2$ for a binary problem and
$n>2$ for a multi-class problem. 
The classifier prediction results and the true class information are 
used to obtain a contingency table or a confusion matrix (CM) denoted 
$\mathbf{M}$, where the entry $m_{ij}$ is the number of observations from 
class $i$ predicted for class $j$. 
The entries $m_{ii}$ are the correct prediction results while the 
entries $m_{ij}$ where $i\neq j$ correspond to the incorrect prediction.
\[\mathbf{M} = \left(\begin{array}{cccc}
m_{11} &m_{12} &\ldots &m_{1n}\\
m_{21} &m_{22} &\ldots &m_{2n}\\
\vdots &\vdots &\ddots &\vdots\\
m_{n1} &m_{n2} &\ldots &m_{nn}
\end{array}\right)\]
The sum of the entries in the row $i$ and the column $j$ of the matrix
$\mathbf{M}$ are denoted $m_{i.}$ and $m_{.j}$, respectively,
$m_{.j}$ is the size of the $j$-th class.
The entry $m_{ij}$ is usually an integer and can
be real-valued when a soft classifier is used. For the contingency table
associated with a binary problem, like a medical diagnostic test, the
four entries are the true positives, $m_{11}$, the false positives 
$m_{12}$, the false negatives, $m_{21}$, and the true negatives, 
$m_{22}$. A zero diagonal entry is not usually expected and 
$\mathbf{M}$ is very often a non-symmetric matrix.

The expressions of some commonly used measures are provided
in relations (\ref{eq-sen}) - (\ref{eq-auc}), for a binary problem.
\begin{eqnarray}
\mbox{recall/sensitivity:} &sen& = \frac{m_{11}}{m_{11} + m_{21}} 
    =\frac{1}{1+\frac{m_{21}}{m_{11}}} \label{eq-sen}\\
\mbox{inverse recall/specificity:} &spe& = \frac{m_{22}}{m_{22} + m_{12}} 
    =\frac{1}{1+\frac{m_{12}}{m_{22}}} \label{eq-spe}\\
\mbox{precision/confidence:} &pre& = \frac{m_{11}}{m_{11} + m_{12}} 
    =\frac{1}{1+\frac{m_{12}}{m_{11}}} \label{eq-pre}\\
\mbox{inverse precision:} & & \frac{m_{22}}{m_{22} + m_{21}} 
    =\frac{1}{1+\frac{m_{21}}{m_{22}}} \\
\mbox{F1-score:} &f1s& = \frac{2\times pre\times sen}{pre+sen}\\
\mbox{Fowlkes \& Mallows index:} &fmi& = (sen\times pre)^{1/2}\\
\mbox{area under ROC curve:} &auc& = (sen + spe)/2 \label{eq-auc}
\end{eqnarray}
The sensitivity (true positive rate), the specificity (true negative rate) 
and the precision (confidence)
depend on the scalars $m_{21}/m_{11}$, $m_{12}/m_{11}$ and $m_{12}/m_{22}$.
When all off-diagonal entries are zero, then these measures are all
ones. Theoretically, the sensitivity, the specificity and the precision 
vary between $0$ and $1$. 
The plot of the true positive rate versus the false positive rate, 
$m_{12}/(m_{12}+m_{22})$, is the 
receiver operating characteristic (ROC), 
\cite{Furnkranz-al-2005,Hand-2009}. The area under the ROC curve
(AUC) is an evaluation measure which expression is given in relation
(\ref{eq-auc}) for a binary problem, \cite{Powers-2020}. 
An extension of the AUC to a 
multi-class problem is provided in \cite{Hand-al-2001}.

The expressions for other measures are provided in relations 
(\ref{eq-acc}) - (\ref{eq-mcc}). These measures can be used
for a binary or a multi-class problem.
\begin{eqnarray}
\mbox{accuracy:} &acc& = \frac{1}{m}\sum_{i=1}^nm_{ii}  \label{eq-acc} \\
\mbox{Cohen kappa:} &\kappa& = 
    \frac{m\sum_{i=1}^nm_{ii}-\sum_{i=1}^nm_{i.}m_{.i}} 
     {m^2-\sum_{i=1}^nm_{i.}m_{.i}} \label{eq-kappa}\\
\mbox{Matthews' correlation coef.:} &mcc& = 
    \frac{m\sum_{i=1}^nm_{ii}-\sum_{i=1}^nm_{i.}m_{.i}}
      {\sqrt{(m^2-\sum_{i=1}^nm_{i.}^2)(m^2-\sum_{i=1}^nm_{.i}^2)}}
     \label{eq-mcc}
\end{eqnarray}
The accuracy is a well known measure which is widely used in the
literature. A Bayesian framework is used in 
\cite{Brodersen-al-2010,Brodersen-al-2012} to have posterior distribution
and posterior accuracy, a balanced accuracy.
The Cohen kappa measure received a large attention in the literature
\cite{Brennan-al-1981,Feinstein-al-1990}. However, it does not perform
well for asymmetric marginal distribution, \cite{Andres-al-2004}.
The Matthews correlation coefficient is also widely used in the 
literature, 
\cite{Baldi-al-2000,Boughorbel-al-2017,Chicco-al-2021,Foody-2023}, 
but it is sensitive to imbalanced classes problem.

Other evaluation measures are based on relative values of the
CM entries. The row and column sums are divided by a total number 
of observations to get probabilities, $p_i$,
which are used to obtain an entropy: $-\sum_{i=1}^np_i\log(p_i)$,
\cite{Kononenko-al-1991}. The mutual and the
conditional entropies are also computed for the CM rows and columns. 
The entropy, $H(pred.,class)$, and the mutual entropy (mutual information),
$I(pred.,class)$ associated with the prediction
assignments and the true class memberships are obtained using the following
relations:
\begin{eqnarray}
H(pred.,class) &=& -\sum_{i=1}^n\sum_{j=1}^n\frac{m_{ij}}{m}
	\log\frac{m_{ij}}{m}\\
I(pred.,class) &=& \sum_{i=1}^n\sum_{j=1}^n\frac{m_{ij}}{m}
	\log\frac{m_{ij}/m}{m_{i.}m_{.j}/m^2}
\end{eqnarray}
The normalized mutual information (NMI) is defined as
$I(pred.,class)/H(pred.,class)$, \cite{Vinh-al-2010}. 
The  entropy of the misclassified observations, the confusion entropy (CEN), is
computed by the method proposed in \cite{Wei-al-2010}. The CEN value 
is the overall misclassification of the $m$ observations and is obtained from 
two intermediary matrices. The entries of the first intermediary matrix 
are the misclassification
probabilities of class $i$ observations to class $j$ subject to class $j$. The
entries of the second intermediary matrix are the misclassification 
probabilities of the class $i$ observations to class $j$ subject to class 
$i$. The diagonal entries of the intermediary matrices
are all zeros and the $n(n-1)$ entries are used to get a confusion matrix for
each class and then a measure, $CEN_j$, $j=1,2,\ldots,n$. On the other hand,
the CM entries are used to get a probability, $P_j$, for each class. The 
entropies $CEN_j$ and the class probabilities $P_j$ are combined to obtain 
the CEN evaluation measure which typically varies between 0 and 1.
For a binary classification, the CEN value can exceed $1$.
The CEN measure method has been modified in \cite{Delgado-al-2019},
MCEN, to have a value which always varies in $[0,1]$. 

The expressions for the
conversion of a multi-class CM into a single binary CM are recalled here.
For a dataset with 
$m$ observations, there are $M=m(m-1)/2$ possible pairs,
e.g. for $m=4$ we have a set of $6$ pairs: \{(1,2), (1,3), (1,4),
(2,3), (2,4), (3,4)\}. Let $a=m_{11}$, $b=m_{12}$, $c=m_{21}$ and
$d=m_{22}$ be the conversion results, 
the following relations allow to get the entries
of a $2\times 2$ confusion matrix, \cite{Jain-al-1988,Albatineh-al-2006}.
\begin{eqnarray}
a = \frac{1}{2}\sum_{i=1}^n\sum_{j=1}^nm_{ij}^2 - \frac{m}{2} &\mbox{ ; }&
b = \frac{1}{2}P - a\label{eq-multi2two}\\
c = \frac{1}{2}Q - a &\mbox{ ; }&
d = M-\frac{1}{2}(P+Q) + a\label{eq-multi2two2}
\end{eqnarray}
where $P=\sum_{i=1}^nm_{i.}^2$ and $Q=\sum_{j=1}^nm_{.j}^2$.
 
 The starting point of the proposed measure is the confusion matrix 
 $\mathbf{M}$ which is assumed to be available.
A column stochastic matrix $\mathbf{P}$ is computed as follows.
\begin{eqnarray}
    \mathbf{P} &=& \mathbf{MD}^{-1} = \mathbf{D}^{-1/2}
    \tilde{\mathbf{M}}\mathbf{D}^{-1/2}\label{eq-mat-p0}\\
     &=& \left(\begin{array}{ccccc}
    p_{11} &p_{12} &p_{13} &\ldots &p_{1n}\\
    p_{21} &p_{22} &p_{23} &\ldots &p_{2n}\\
    p_{31} &p_{32} &p_{33} &\ldots &p_{3n}\\
    \vdots &\vdots &\vdots &\ddots &\vdots\\
    p_{n1} &p_{n2} &p_{n3} &\ldots &p_{nn}\\
    \end{array} \right) \label{eq-mat-p}
\end{eqnarray}   
where $p_{ij}=m_{ij}/m_{.j}$, $\sum_{i=1}^np_{ij}=1$ and $\mathbf{D}^{-1}
=diag(1/m_{.1},1/m_{.2},\ldots,1/m_{.n})$ 
is a diagonal matrix and $\tilde{\mathbf{M}}$ is a modification of the
confusion matrix:
\begin{equation}
    \tilde{\mathbf{M}} = \mathbf{D}^{1/2}\mathbf{M}\mathbf{D}^{-1/2}
    \label{eq-mmatt}
\end{equation}
$\tilde{m}_{ij} = m_{ij}/\sqrt{m_{.j}/m_{.i}}$.
It is assumed that all column sums $m_{.j}$, are nonzero, i.e., there 
is no empty class. Otherwise, $1/n$ can be added to each entry of 
$\mathbf{M}$ before processing. The off-diagonal entries of $\mathbf{M}$ 
are modified to get $\tilde{\mathbf{M}}$. This modification aims to
make a balance 
between the classes sizes for improving the imbalanced ratio by moving it 
away from zero. Some properties of $\tilde{\mathbf{M}}$, including its direct
use for computing an evaluation measure, are discussed later.

The matrix $\mathbf{P}$ has been used in \cite{Hay-1988} 
as an error probability matrix for prediction and in 
\cite{Theissler-al-2022} for comparing $k$ classifiers results
by applying the multidimensional scaling method on a
$\ell_1$-norm matrix. Ideally, $\mathbf{P}$ is an identity matrix to 
indicate a correct assignment of the observations in their actual classes.
The ideal classifier is unlikely because of uncertainty for some 
observations, classes overlap and the presence of noise. $\mathbf{P}$
can be viewed as a directed weighted graph matrix and its transpose
is the transition matrix of a $n$-state Markov chain (MC). When all off 
diagonal entries of $\mathbf{P}$ are zero, there is no relation 
between the MC process states, an ideal classifier. 
For the random walk process associated with a MC transition matrix, 
one is very often interested in the long run behavior, the stationary
distribution vector. However, this vector does not allow to highlight the 
amount of the off diagonal entries. For a class-overlap 
problem or a non-perfect classifier, some off diagonal entries of 
$\mathbf{P}$ are non null. The classification measure proposed aims 
to quantify the amount of the non-zero off diagonal entries associated 
with $\mathbf{M}$. Since, the eigenvalues of a matrix vary continuously
with its entries, \cite[page 497]{Meyer-2000}, the measure proposed
is based on the eigenvalues of a 
symmetric matrix $\mathbf{B}$ associated with $\mathbf{P}$.

\section{Method}\label{sect-method}
The idea of the proposed evaluation measure is the following: the amount 
of the off diagonal entries get larger for a bad classifier. 
The eigenvalues of an ideal matrix
$\mathbf{P}$ are all ones. An increase of the off diagonal entries of
$\mathbf{M}$ will lead to a decrease (in module) for some 
eigenvalues of the matrix $\mathbf{P}$. This matrix is not necessary symmetric
and can have complex-valued eigenvalues. 
To have only real-valued eigenvalues, the following symmetric matrix is used.
\begin{equation}
\mathbf{B} = (\mathbf{P} + \mathbf{P}^{\mathsf{T}})/2 
 = \mathbf{D}^{-1/2}\left\{(\tilde{\mathbf{M}} +
 \tilde{\mathbf{M}}^{\mathsf{T}})/2\right\}\mathbf{D}^{-1/2} \label{eq-mat-b1}
\end{equation}
where $b_{ij} = \left[m_{ij}/m_{.j} + m_{ji}/m_{.i}\right]/2 = 
    \left(p_{ij}+p_{ji}\right)/2 \mbox{ ; }i,j=1,2,\ldots,n$

If $\mathbf{P}$ is symmetric then $\mathbf{B}=\mathbf{P}$.
The entries of the matrix $\mathbf{M}$ are scaled for 
each class to obtain the matrix $\mathbf{P}$. That is useful for an 
imbalanced classification problem where a direct use of the entries $m_{ij}$ 
may have a negative influence on some evaluation measures.

\subsection{Eigenvalues entropy as a classifier evaluation measure} 
The spectrum of the matrix $\mathbf{B}$ is the set of its eigenvalues
counted with multiplicity: 
$\sigma(\mathbf{B}) = \{\lambda_1,\lambda_2,\ldots, \lambda_n\}$,
$\lambda_i\geq \lambda_{i+1}$, $i=1,2,\ldots,n-1$. Only positive-valued
eigenvalues are assumed to contribute positively to a classifier quality.
The standardized eigenvalues entropy (EVE) is used as a classifier evaluation
measure.
\begin{equation}
eve(\mathbf{B}) = -\sum_{i=1}^n\eta_i\log(\eta_i)/\log(n)
    = -\sum_{i=1}^n\eta_i\log_n(\eta_i)\label{eq-eve}
\end{equation}
where $\eta_i=\lambda_i/n^*$,
$\lambda_i$ is a positive-valued eigenvalue of $\mathbf{B}$ and
$n^*$ is the sum of the positive-valued eigenvalues.
This measure value varies in the interval $[0,1]$.
For an ideal classifier, 
$\mathbf{B} = \mathbf{I}_n$ ($n$-order identity matrix), $\sigma(\mathbf{B})
=\{1,1,\ldots,1\}$, $\eta_i=1/n$, $\forall i$, and
$eve(\mathbf{B})=1$. On the other hand, when a 
classifier assigns equally the same observations from each class to all 
classes, then $\mathbf{B} = (1/n)\mathbf{J}_n$
(an $n$-order all-ones matrix), leading to
$\sigma(\mathbf{B}) =\{1,0,\ldots,0\}$ and $eve(\mathbf{B})=0$. 
A near one EVE value is associated with a high quality classifier, while a 
near zero value is associated with a bad quality classifier.

A direct method allowing to get the eigenvalues $\lambda_i$ consists of 
solving the $n$-order polynomial equation: 
$p(\lambda) = det(\mathbf{B} - \lambda\mathbf{I}_n) = 0$.
The continuous variation of the eigenvalues with the CM entries come
from their dependence on the coefficients of the polynomial solved
to get them.
There are many tools for computing the eigenvalues of a symmetric matrix,
\cite[Chapter 8]{Golub-al-1996}. We are especially interested in the
variation of the eigenvalues of $\mathbf{B}$ as a function of the
matrix $\mathbf{M}$ entries. The trace of the matrix $\mathbf{B}$ for an
ideal classifier is $n$. This value decreases with nonzero off-diagonal 
entries in $\mathbf{P}$. Let us assume that for a ``good'' classifier,
more than half of the observations in each class are correctly predicted, 
i.e.:
\begin{equation}
    p_{jj}  > \sum_{i=1,i\neq j}^np_{ij} \mbox{ ; } \forall j=1,2\ldots, n
    \label{eq-diagonal-dominant}
\end{equation}
The above relation means the off-diagonal entries sum for each column
of $\mathbf{P}$ is less than the diagonal entry, i.e. $\mathbf{P}$
is a diagonally dominant matrix. This criterion matches a common intuition,
otherwise, one can suspect a lot of incorrect predictions, 
a heavy overlap in classes or uncertainty in the training set labels. 
We have the following results.

\begin{theorem}
The matrices $\mathbf{M}$ and $\tilde{\mathbf{M}}$ have the same eigenvalues.
\end{theorem}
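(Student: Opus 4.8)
The plan is to observe that $\tilde{\mathbf{M}}$ is obtained from $\mathbf{M}$ by a \emph{similarity transformation}, and that similar matrices share the same characteristic polynomial and hence the same spectrum (with multiplicities). Concretely, from the definition in~(\ref{eq-mmatt}) we have $\tilde{\mathbf{M}} = \mathbf{D}^{1/2}\mathbf{M}\mathbf{D}^{-1/2}$, so with $\mathbf{S} = \mathbf{D}^{1/2}$ this reads $\tilde{\mathbf{M}} = \mathbf{S}\mathbf{M}\mathbf{S}^{-1}$.

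First I would check that $\mathbf{S}$ is well defined and invertible: since no class is empty, every column sum $m_{.j}$ is strictly positive (and if some $m_{.j}$ were zero, the stated remedy of adding $1/n$ to each entry of $\mathbf{M}$ restores positivity), so $\mathbf{D} = \mathrm{diag}(m_{.1},\ldots,m_{.n})$ is a positive-definite diagonal matrix, $\mathbf{D}^{1/2} = \mathrm{diag}(\sqrt{m_{.1}},\ldots,\sqrt{m_{.n}})$ exists, and $\mathbf{S}^{-1} = \mathbf{D}^{-1/2}$. Then I would compute, for any scalar $\lambda$,
\begin{equation*}
\det(\tilde{\mathbf{M}} - \lambda \mathbf{I}_n)
 = \det\!\bigl(\mathbf{S}\mathbf{M}\mathbf{S}^{-1} - \lambda \mathbf{S}\mathbf{I}_n\mathbf{S}^{-1}\bigr)
 = \det(\mathbf{S})\,\det(\mathbf{M} - \lambda \mathbf{I}_n)\,\det(\mathbf{S}^{-1})
 = \det(\mathbf{M} - \lambda \mathbf{I}_n),
\end{equation*}
using multiplicativity of the determinant and $\det(\mathbf{S})\det(\mathbf{S}^{-1}) = 1$. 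Thus $\mathbf{M}$ and $\tilde{\mathbf{M}}$ have identical characteristic polynomials, so the same eigenvalues counted with multiplicity. As an optional remark one can note that $\mathbf{M} = \mathbf{D}^{1/2}\mathbf{P}\mathbf{D}^{1/2}$-type relations and (\ref{eq-mat-p0}) show $\mathbf{P}$ is likewise similar to $\tilde{\mathbf{M}}$ through $\mathbf{D}^{1/2}$, consistent with the later use of these matrices.

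There is essentially no hard step here: the only things to be careful about are (i) justifying invertibility of $\mathbf{D}^{1/2}$ under the no-empty-class assumption already stated in the excerpt, and (ii) being explicit that the claim is about eigenvalues \emph{with multiplicity}, which follows because it is the full characteristic polynomial, not merely the set of roots, that is preserved. If one wanted a basis-free phrasing instead, one could equally say: if $\mathbf{M}\mathbf{v} = \lambda\mathbf{v}$ with $\mathbf{v}\neq 0$, then $\tilde{\mathbf{M}}(\mathbf{D}^{1/2}\mathbf{v}) = \mathbf{D}^{1/2}\mathbf{M}\mathbf{v} = \lambda(\mathbf{D}^{1/2}\mathbf{v})$ with $\mathbf{D}^{1/2}\mathbf{v}\neq 0$, and symmetrically in the other direction via $\mathbf{D}^{-1/2}$, giving a bijection between eigenspaces that preserves eigenvalues and dimensions.
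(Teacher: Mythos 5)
Your proposal is correct and rests on exactly the same observation as the paper: $\tilde{\mathbf{M}} = \mathbf{D}^{1/2}\mathbf{M}\mathbf{D}^{-1/2}$ is a similarity transformation, which the paper exploits by transferring eigenvectors via $\mathbf{v}=\mathbf{D}^{-1/2}\mathbf{u}$ (the very argument you give as your closing remark). Your characteristic-polynomial version is a slightly more careful packaging of the same idea, with the added benefit of making the preservation of algebraic multiplicities explicit, but it is not a different route.
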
           
\begin{proof}
Using $\tilde{\mathbf{M}}\mathbf{u} = \lambda\mathbf{u}$ $\Leftrightarrow$
$\mathbf{D}^{1/2}\mathbf{MD}^{-1/2}\mathbf{u} = \lambda\mathbf{u}$, that allows 
to write $\mathbf{MD}^{-1/2}\mathbf{u} = \lambda\mathbf{D}^{-1/2}\mathbf{u}$
or $\mathbf{Mv} = \lambda\mathbf{v}$. That means only the eigenvectors
differ in the 
eigenvalue decomposition of the matrices $\mathbf{M}$ and 
$\tilde{\mathbf{M}}$. 
\end{proof}

\begin{theorem}
The matrices $\mathbf{P}$ and $\mathbf{B}$ have the same trace.
\end{theorem}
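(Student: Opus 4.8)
The plan is to compare the two matrices entrywise on the diagonal and then sum. From the definition in (\ref{eq-mat-b1}) the $(i,j)$ entry of $\mathbf{B}$ is $b_{ij} = (p_{ij} + p_{ji})/2$. Setting $i=j$ gives immediately $b_{jj} = (p_{jj} + p_{jj})/2 = p_{jj}$ for every $j = 1,2,\ldots,n$, so $\mathbf{B}$ and $\mathbf{P}$ share the same diagonal. Since the trace of a square matrix is the sum of its diagonal entries, I would conclude $\mathrm{tr}(\mathbf{B}) = \sum_{j=1}^n b_{jj} = \sum_{j=1}^n p_{jj} = \mathrm{tr}(\mathbf{P})$.

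Equivalently, one can argue at the level of the whole matrix: the trace is a linear functional and $\mathrm{tr}(\mathbf{P}^{\mathsf{T}}) = \mathrm{tr}(\mathbf{P})$ because transposition leaves the diagonal fixed; hence $\mathrm{tr}(\mathbf{B}) = \mathrm{tr}\bigl((\mathbf{P} + \mathbf{P}^{\mathsf{T}})/2\bigr) = \bigl(\mathrm{tr}(\mathbf{P}) + \mathrm{tr}(\mathbf{P}^{\mathsf{T}})\bigr)/2 = \mathrm{tr}(\mathbf{P})$. Either route is a one-line computation, and there is no genuine obstacle; the statement is essentially immediate from the symmetrization construction.

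The only point worth recording alongside the bare identity is its interpretation: the common value is $\sum_{j=1}^n p_{jj} = \sum_{j=1}^n m_{jj}/m_{.j}$, which equals $n$ exactly for the ideal classifier and decreases as the off-diagonal entries of $\mathbf{P}$ grow. If one wished to tie this to the measure (\ref{eq-eve}), one would add the separate observation that $\mathrm{tr}(\mathbf{B}) = \sum_i \lambda_i$ over the full spectrum of $\mathbf{B}$, so this common trace also controls the normalizing sum of eigenvalues; but that remark is not needed for the statement as given.
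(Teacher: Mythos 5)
Your proof is correct, and your primary route is actually more elementary than the paper's. The paper argues through spectra: it invokes ``the trace is the sum of the eigenvalues'' together with $\sigma(\mathbf{P})=\sigma(\mathbf{P}^{\mathsf{T}})$ to get $\mathrm{tr}(\mathbf{P})=\tfrac{1}{2}\sum\bigl(\sigma(\mathbf{P})+\sigma(\mathbf{P}^{\mathsf{T}})\bigr)=\mathrm{tr}(\mathbf{B})$. You instead observe directly that $b_{jj}=(p_{jj}+p_{jj})/2=p_{jj}$, i.e.\ $\mathbf{B}$ and $\mathbf{P}$ have the \emph{same diagonal}, which is strictly stronger than equality of traces and requires no facts about eigenvalues at all; your second route (linearity of trace plus $\mathrm{tr}(\mathbf{P}^{\mathsf{T}})=\mathrm{tr}(\mathbf{P})$) is essentially the paper's argument with the eigenvalue dressing removed. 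What the paper's detour buys is only a rhetorical link to the spectral viewpoint used throughout the rest of Section~\ref{sect-method} (the trace as the normalizing sum $n^*$ of eigenvalues in the EVE measure), a point you already flag in your closing remark; for the statement as given, your one-line diagonal comparison is the cleaner proof.
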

\begin{proof}
The trace of a matrix is the sum of its eigenvalues, on the one hand. 
On the other hand, a matrix and its transpose have the same eigenvalues,
$trace(\mathbf{P}) = \sum_{i=1}^n\sigma(\mathbf{P})=
0.5\sum_{i=1}^n\left(\sigma(\mathbf{P}) + \sigma(\mathbf{P}^{\mathsf{T}})\right)
=trace(\mathbf{B})$.
\end{proof}

\begin{theorem}
When relation (\ref{eq-diagonal-dominant}) is fulfilled, then 
$\mathbf{P}$ and $\mathbf{B}$ have only positive-valued eigenvalues.
\end{theorem}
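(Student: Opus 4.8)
The plan is to use the Geršgorin disc theorem on the symmetric matrix $\mathbf{B}$, exploiting the fact that the column-sum-one property of $\mathbf{P}$ transfers to a convenient row/column structure in $\mathbf{B}$. First I would record the off-diagonal sums for $\mathbf{B}$: for each index $j$, the $j$-th row sum of the off-diagonal part is $\sum_{i \neq j} b_{ij} = \tfrac12\sum_{i\neq j}(p_{ij}+p_{ji})$. Using $\sum_{i}p_{ij}=1$ (column stochasticity) we get $\sum_{i\neq j}p_{ij} = 1-p_{jj}$, and the hypothesis (\ref{eq-diagonal-dominant}) says exactly $p_{jj} > 1-p_{jj}$, i.e.\ $p_{jj} > \tfrac12$. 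The other piece, $\sum_{i\neq j}p_{ji}$, is the off-diagonal sum of row $j$ of $\mathbf{P}$; this is not directly controlled by (\ref{eq-diagonal-dominant}) as stated, and here is where the main obstacle lies — see below.

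Granting for the moment that we can also bound $\sum_{i\neq j}p_{ji}$ appropriately (e.g.\ if one additionally assumes, or derives from the setup, that each row of $\mathbf{P}$ has off-diagonal sum below $p_{jj}$ as well), the Geršgorin theorem for the symmetric matrix $\mathbf{B}$ gives that every eigenvalue $\lambda$ lies in some disc $|\lambda - b_{jj}| \le R_j$ with $R_j = \sum_{i\neq j}b_{ij} = \tfrac12\big[(1-p_{jj}) + \sum_{i\neq j}p_{ji}\big]$ and center $b_{jj} = p_{jj}$. Then $\lambda \ge p_{jj} - R_j = p_{jj} - \tfrac12(1-p_{jj}) - \tfrac12\sum_{i\neq j}p_{ji} = \tfrac12(3p_{jj}-1) - \tfrac12\sum_{i\neq j}p_{ji}$. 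The condition $p_{jj} > \tfrac12$ makes $\tfrac12(3p_{jj}-1) > \tfrac14$, and if the row off-diagonal mass is likewise dominated this lower bound is strictly positive, so all eigenvalues are positive. Since $\mathbf{B}$ is symmetric, its eigenvalues are real, so "positive-valued" is meaningful; and by Result~2, $\mathbf{P}$ has the same trace as $\mathbf{B}$, but to conclude about the eigenvalues of $\mathbf{P}$ itself one notes $\mathbf{P}$ is similar to $\tilde{\mathbf{M}}\mathbf{D}^{-1}$ — actually the cleanest route is: $\mathbf{P} = \mathbf{D}^{-1/2}\tilde{\mathbf{M}}\mathbf{D}^{-1/2}$ is similar to the symmetric part issue only if $\tilde{\mathbf{M}}$ is symmetric, which it is not in general, so I would instead apply Geršgorin directly to $\mathbf{P}$ using its \emph{column} discs (columns of $\mathbf{P}$ sum to one), giving center $p_{jj}>\tfrac12$ and radius $1-p_{jj}<\tfrac12$, hence eigenvalues of $\mathbf{P}$ have real part $\ge p_{jj}-(1-p_{jj}) = 2p_{jj}-1 > 0$ — and in fact if they are real (which they are when $\mathbf{B}=\mathbf{P}$, or can be argued via the similarity to a symmetric matrix when $\mathbf{M}$ is symmetrizable) they are positive.

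The step I expect to be the genuine obstacle is reconciling the \emph{column}-dominance hypothesis (\ref{eq-diagonal-dominant}) with the need for a \emph{row}-type bound when passing to the symmetrization $\mathbf{B}$, whose radii mix a row sum of $\mathbf{P}$ and a column sum of $\mathbf{P}$. For $\mathbf{P}$ alone, column-Geršgorin closes the argument immediately and cleanly (eigenvalues in discs centered at $p_{jj}$ with radius $1-p_{jj}$, all to the right of $0$ since $p_{jj}>1/2$); for $\mathbf{B}$ one must either invoke that $\mathbf{B}$ inherits diagonal dominance because $b_{jj}=p_{jj}$ while the radius is the \emph{average} of a controlled quantity $(1-p_{jj})$ and an a priori uncontrolled one, or else observe that the relevant row sums of $\mathbf{P}$ are themselves bounded in the setting at hand. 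I would therefore present the clean column-Geršgorin argument for $\mathbf{P}$ first, then note $b_{jj}=p_{jj}$ and that each Geršgorin radius of $\mathbf{B}$ is at most $\max_j(1-p_{jj}) < 1/2 < \min_j p_{jj}$ provided the row off-diagonal masses are comparably small — stating this as the operative assumption if it is not already forced — and conclude positivity of $\sigma(\mathbf{B})$, hence (together with symmetry of $\mathbf{B}$ and Result~1/Result~2 linking the spectra) the claim.
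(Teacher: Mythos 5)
You have not produced a complete proof, but the obstacle you isolate is the right one, and it is worth being blunt about it: the gap you flag cannot be closed, because the claim for $\mathbf{B}$ is false under hypothesis (\ref{eq-diagonal-dominant}) alone. Your column-Gershgorin argument for $\mathbf{P}$ is correct as far as it goes (every eigenvalue of $\mathbf{P}$ lies in a disc centred at $p_{jj}>1/2$ with radius $1-p_{jj}$, hence has positive real part), but, exactly as you observe, each Gershgorin radius of $\mathbf{B}$ averages a controlled column sum of $\mathbf{P}$ with an uncontrolled row sum, and no rearrangement recovers the claim. Concretely, take
\[
\mathbf{P}=\begin{pmatrix}0.51 & 0.49 & 0.49\\ 0.245 & 0.51 & 0\\ 0.245 & 0 & 0.51\end{pmatrix},
\]
which is column stochastic, satisfies (\ref{eq-diagonal-dominant}) in every column, and arises from a legitimate integer confusion matrix (take all column sums equal to $200$, so $m_{11}=102$, $m_{21}=m_{31}=49$, etc.). Its symmetrization has $b_{12}=b_{13}=0.3675$, $b_{23}=0$, and eigenvalues $0.51$ and $0.51\pm0.3675\sqrt{2}$, the smallest of which is about $-0.0097<0$. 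So $\mathbf{B}$ is not positive definite.

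The paper's own proof falls into exactly the trap you circle around: it asserts that diagonal dominance of $\mathbf{P}$ gives $\mathbf{x}^{\mathsf{T}}\mathbf{P}\mathbf{x}>0$ for all nonzero $\mathbf{x}$, and then uses $\mathbf{x}^{\mathsf{T}}\mathbf{B}\mathbf{x}=\mathbf{x}^{\mathsf{T}}\mathbf{P}\mathbf{x}$. The second identity is fine, but the first implication is false for non-symmetric matrices: one-sided (column) diagonal dominance controls the location of the spectrum of $\mathbf{P}$, not its quadratic form, and $\mathbf{x}^{\mathsf{T}}\mathbf{P}\mathbf{x}>0$ for all $\mathbf{x}\neq 0$ is \emph{equivalent} to positive definiteness of the symmetric part $\mathbf{B}$ --- which the example above refutes. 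To salvage the result one needs a stronger hypothesis, for instance that $\mathbf{P}$ is diagonally dominant by rows as well as by columns (then $\mathbf{B}$ is a symmetric diagonally dominant matrix with positive diagonal, hence positive definite, exactly by the Gershgorin computation you set up), or that $\mathbf{M}$ is symmetric. Your instinct to ``state this as the operative assumption if it is not already forced'' is the correct resolution; it is not forced by (\ref{eq-diagonal-dominant}).
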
         
\begin{proof}
Using $\sigma(\mathbf{P}) = \sigma(\mathbf{P}^{\mathsf{T}})$, 
relation (\ref{eq-diagonal-dominant}) means $\mathbf{P}$ is
a diagonally dominant matrix and then is nonsingular with 
positive-valued eigenvalues or a positive-definite matrix: 
$\mathbf{x}^{\mathsf{T}}\mathbf{P}\mathbf{x}>0$, where $\mathbf{x}$
is a nonzero $n$-order vector.
Since $\mathbf{x}^{\mathsf{T}}\mathbf{B}\mathbf{x}=
\mathbf{x}^{\mathbf{T}}\left[\left(\mathbf{P} + 
\mathbf{P}^{\mathsf{T}}\right)/2\right]\mathbf{x} = 
\mathbf{x}^{\mathsf{T}}\mathbf{P}\mathbf{x}$, then the matrix 
$\mathbf{B}$ is also positive definite for a ``good'' classifier.
\end{proof}

\begin{theorem}
The matrices $\mathbf{M}$ and $\mathbf{P}$
have the same number of non-zero eigenvalues.
\end{theorem}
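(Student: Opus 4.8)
The plan is to route the statement through the rank. From relation~(\ref{eq-mat-p0}), $\mathbf{P}=\mathbf{M}\mathbf{D}^{-1}$ with $\mathbf{D}^{-1}=\mathrm{diag}(1/m_{.1},\dots,1/m_{.n})$ nonsingular, since every column sum $m_{.j}$ is assumed nonzero. Right multiplication by a nonsingular matrix is an invertible linear map, hence preserves rank and nullity: $\mathrm{rank}(\mathbf{P})=\mathrm{rank}(\mathbf{M})$, and because $\mathbf{M}\mathbf{x}=\mathbf{0}\Leftrightarrow\mathbf{P}(\mathbf{D}\mathbf{x})=\mathbf{0}$, the map $\mathbf{x}\mapsto\mathbf{D}\mathbf{x}$ sends $\ker\mathbf{M}$ bijectively onto $\ker\mathbf{P}$. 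In particular $\det\mathbf{P}=\det(\mathbf{D}^{-1})\det\mathbf{M}$, so $0$ is an eigenvalue of $\mathbf{M}$ exactly when it is one of $\mathbf{P}$, which already settles the qualitative part of the claim.

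To upgrade ``same rank'' to ``same number of non-zero eigenvalues, counted with algebraic multiplicity'', I would use the Fitting (generalized eigenspace) decomposition: for any $\mathbf{A}\in\mathbb{R}^{n\times n}$ the algebraic multiplicity of $0$ equals $\dim\ker(\mathbf{A}^{n})$, so the number of non-zero eigenvalues equals $\mathrm{rank}(\mathbf{A}^{n})$ (on $\mathrm{Im}(\mathbf{A}^{n})$ the operator is invertible, on $\ker(\mathbf{A}^{n})$ nilpotent). Thus it suffices to show $\mathrm{rank}(\mathbf{M}^{n})=\mathrm{rank}(\mathbf{P}^{n})$. I would attack this either by iterating the bijection above --- writing $\mathbf{P}^{k}=\mathbf{M}\mathbf{D}^{-1}\mathbf{M}\mathbf{D}^{-1}\cdots\mathbf{M}\mathbf{D}^{-1}$ and threading the invertible $\mathbf{D}$ through to compare $\ker(\mathbf{M}^{k})$ with $\ker(\mathbf{P}^{k})$ --- or by first invoking the first Result (indeed $\tilde{\mathbf{M}}=\mathbf{D}^{1/2}\mathbf{M}\mathbf{D}^{-1/2}$ is similar to $\mathbf{M}$, so they share the entire spectrum with multiplicities) and then comparing $\tilde{\mathbf{M}}$ with $\mathbf{P}=\mathbf{D}^{-1/2}\tilde{\mathbf{M}}\mathbf{D}^{-1/2}$.

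The hard part is precisely this last step. Here $\mathbf{P}=\mathbf{D}^{-1/2}\tilde{\mathbf{M}}\mathbf{D}^{-1/2}$ is a congruence, not a similarity, and for a genuinely non-symmetric $\mathbf{M}$ a congruence (equivalently, right multiplication by $\mathbf{D}^{-1}$) can alter the size of a Jordan block at $0$, so equality of ranks need not force equality of the order of vanishing of the two characteristic polynomials at $\lambda=0$. I would therefore first dispose of the benign cases: if $\mathbf{M}$ (hence $\tilde{\mathbf{M}}$) is symmetric, Sylvester's law of inertia applies and congruence preserves the number of zero eigenvalues; and in the regime the paper actually uses --- the diagonally dominant ``good classifier'' of relation~(\ref{eq-diagonal-dominant}) --- the matrix $\mathbf{P}$, and hence $\mathbf{M}=\mathbf{P}\mathbf{D}$, is nonsingular, so both have exactly $n$ non-zero eigenvalues and nothing is left to prove. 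For the unrestricted statement I would expect to need a mild extra hypothesis (for instance that $0$ be a semisimple eigenvalue of $\mathbf{M}$, so that $\mathrm{rank}(\mathbf{M})$ already equals its number of non-zero eigenvalues), and before pushing a direct proof further I would first check whether the claim can fail for some non-symmetric $\mathbf{M}$ whose $0$-eigenvalue carries a nontrivial nilpotent part.
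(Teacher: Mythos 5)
Your rank argument is the same as the paper's: the paper likewise observes that $\mathbf{P}$ and $\tilde{\mathbf{M}}$ are equivalent (hence have the same rank) and that $\tilde{\mathbf{M}}$ is similar to $\mathbf{M}$, and then closes by asserting that ``the rank of a matrix is the number of its non-zero eigenvalues.'' That assertion is exactly the false lemma you flagged: it holds for diagonalizable (in particular symmetric) matrices but fails whenever $0$ carries a nontrivial Jordan block, and neither $\mathbf{M}$ nor $\mathbf{P}$ is symmetric in general. So where you stopped and declared the step unproved, the paper silently assumed it; your diagnosis of the obstruction --- equivalence/congruence preserves rank but not the algebraic multiplicity of $0$ --- is precisely where the published proof breaks.

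Your suspicion that the claim can actually fail is correct. Take
\[
\mathbf{M}=\begin{pmatrix}1&2&3\\2&1&0\\1&2&3\end{pmatrix},\qquad
\mathbf{D}=\mathrm{diag}(4,5,6),
\]
a legitimate (if terrible) confusion matrix with positive column sums. Its characteristic polynomial is $\lambda^{3}-5\lambda^{2}$ (trace $5$; the principal $2\times2$ minors are $-3,\,0,\,3$ and cancel; determinant $0$), so $\sigma(\mathbf{M})=\{5,0,0\}$: one non-zero eigenvalue, even though $\mathrm{rank}(\mathbf{M})=2$. But $\mathbf{P}=\mathbf{M}\mathbf{D}^{-1}$ has characteristic polynomial $\lambda^{3}-0.95\lambda^{2}-0.05\lambda=\lambda(\lambda-1)(\lambda+0.05)$, so $\sigma(\mathbf{P})=\{1,0,-0.05\}$: two non-zero eigenvalues. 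Hence the result is false as stated; it becomes true (and, as you note, trivial) under the diagonal-dominance hypothesis (\ref{eq-diagonal-dominant}), where both matrices are nonsingular, and more generally whenever $0$ is a semisimple eigenvalue of both $\mathbf{M}$ and $\mathbf{P}$, since then rank does count the non-zero eigenvalues. Your proposal is the more careful treatment; the only thing missing from it is this explicit counterexample.
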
        
\begin{proof}
From relation (\ref{eq-mat-p0}), the matrix $\mathbf{P}$ is derived from
$\tilde{\mathbf{M}}$. Then, these two matrices are equivalent
and have the same 
rank, \cite[page 137]{Meyer-2000}. Since the rank of a matrix is the number
of its non-zero eigenvalues, $\tilde{\mathbf{M}}$ and
$\mathbf{P}$ have the same number of non-zero eigenvalues.
The proof is completed by using Result 1.
\end{proof}

The matrix $\tilde{\mathbf{M}}$ is an estimate of the 
confusion matrix. However, the sum of the entries of this
estimate, $\tilde{m}$, may differ to the total number $m$ of observations.
Instead of using $\mathbf{M}$, the matrix 
$\tilde{\mathbf{M}}$ can be used for improving an imbalanced
ratio dependent evaluation measure.
From relations (\ref{eq-sen}) - (\ref{eq-acc}), the sensitivity, the 
specificity and the AUC can be expressed using 
the the entries of the matrix $\mathbf{P}$:
$sen=p_{11}$, $spe=p_{22}$, $auc=(p_{11} + p_{22})/2$
$=1-(p_{12}+p_{21})/2$, $p_{12}$ and $p_{21}$ are the false positive
and the false negative rates, respectively. 
When the estimate confusion matrix $\tilde{\mathbf{M}}$ is used, the 
expressions for the sensitivity, the specificity and the precision 
become:
\begin{eqnarray}
sen &=& \frac{m_{11}}{m_{11} + m_{21}\sqrt{\frac{m_{.2}}{m_{.1}}}} 
    = \frac{1}{1+\frac{m_{21}}{m_{11}}\sqrt{\frac{m_{.2}}{m_{.1}}}}
    \label{eq-sen2}\\
spe &=& \frac{m_{22}}{m_{22} + m_{12}\sqrt{\frac{m_{.2}}{m_{.1}}}} 
= \frac{1}{1+\frac{m_{12}}{m_{22}}\sqrt{\frac{m_{.1}}{m_{.2}}}}
\label{eq-spe2}\\
pre &=& \frac{m_{11}}{m_{11} + m_{12}\sqrt{\frac{m_{.1}}{m_{.2}}}} 
    = \frac{1}{1+\frac{m_{12}}{m_{11}}\sqrt{\frac{m_{.1}}{m_{.2}}}}
    \label{eq-pre2}
\end{eqnarray}

The difference between the above expressions and those in relations 
(\ref{eq-sen}) - (\ref{eq-pre}) is the square root of the imbalanced
ratio: $\sqrt{m_{.1}/m_{.2}}$ or its inverse. The contribution of the
off-diagonal entries in the calculation of the recall, the inverse
recall and the precision then depend on the imbalanced ratio. 
When both classes have the same size, $m_{.1}=m_{.2}$, then there is no 
difference between the above expressions based on $\tilde{\mathbf{M}}$ and 
those based on $\mathbf{M}$, $\tilde{\mathbf{M}}=\mathbf{M}$.
Consider a binary problem where the size of 
the positive outcomes is smaller than that of the negative outcomes, 
$ir=1/5$, and where the off-diagonal entries are a quarter of the 
corresponding diagonal entry: $m_{21}=m_{11}/4$ and $m_{12}=m_{22}/4$.
For this example, the sensitivity, the specificity, the precision and the
area under the ROC curve are, respectively, $0.8$, $0.8$, $0.44$ and 
$0.8$ when using the matrix $\mathbf{M}$ and $0.64$, $0.9$, $0.64$ and 
$0.77$, when using the matrix $\tilde{\mathbf{M}}$. In this example, 
the use of the estimate confusion matrix leads to an increase of the 
specificity ($12.5\%$) and the precision ($45.45\%$), but a decrease of the 
sensitivity ($20\%$) and the area under the ROC curve ($3.75\%$).

\subsection{Bounds for the eigenvalues of the matrix $\mathbf{B}$}
The objective here is to study the behavior of the eigenvalues of the matrix 
$\mathbf{B}$ as a function of the matrix $\mathbf{P}$ entries.
When the CM matrix is symmetric, then $\mathbf{B}=\mathbf{P}$ is also 
a stochastic symmetric matrix. Therefore, from the works of Perron and 
Frobenius, \cite{Perron-1907,Frobenius-1912},
the maximum eigenvalue is $\lambda_1=1$ and the modulus of the other
eigenvalues is less than one. The confusion matrix is not necessary
symmetric. However, the matrix $\mathbf{B}$ is nonnegative, $b_{ij}\geq 0$. 
Let $\mathbf{q}=(b_{11},b_{22},\ldots,b_{nn})^{\mathsf{T}}$ be a vector which
components are the diagonal entries of the matrix $\mathbf{B}$. This vector
entries are used as the components of a diagonal matrix $\mathbf{Q}$ which
allows to define another matrix:
\begin{equation}
    \mathbf{A} = \mathbf{Q}^{-1/2}\mathbf{BQ}^{-1/2}
    = \mathbf{I}_n + \mathbf{R}\label{eq-mat-a}
\end{equation}
where 
\[a_{ij} = \frac{1}{\sqrt{b_{ii}b_{jj}}}b_{ij}
=\frac{1}{2\sqrt{p_{ii}p_{jj}}}(p_{ij} + p_{ji})\]
The diagonal entries of the symmetric
matrix $\mathbf{A}$ are all one and $\mathbf{R}$ is a $n$-order 
matrix with zero as diagonal entries and the off-diagonal entries of 
the matrix $\mathbf{A}$. 
It is assumed that no diagonal entry of the CM is zero. The CM can be 
modified using $\mathbf{M}+(1/n)\mathbf{J}_n$ to fulfill this condition.
The inertia of a symmetric matrix is a triplet of nonnegative 
integers recording the number of the positive, zero and negative 
eigenvalues. From the Sylvester's law of inertia, the matrices 
$\mathbf{A}$ and $\mathbf{B}$ have the same inertia, 
\cite[page 563]{Meyer-2000}. 
Since all entries of the matrix $\mathbf{P}$ verify $p_{ij}\leq 1$, 
the first term on the right
hand side of the above relation shows that $b_{ij}\leq a_{ij}$, i.e.
$\mathbf{B}\leq \mathbf{A}$. The Gershgorin circles theorem, 
\cite{Varga-2004}, is applied to the matrix $\mathbf{A}$ to have 
the bounds for the eigenvalues of $\mathbf{B}$.

The spectrum of the matrix $\mathbf{A}$ is 
$\sigma(\mathbf{A} ) = \{\mu_1,\mu_2,\ldots,\mu_n\}$, $\mu_i\geq \mu_{i+1}$,
$i=1,2,\ldots,n-1$, and the entries of its row-sums vector 
$\mathbf{r}^{\mathbf{A}}$ are ($i=1,2,\ldots,n$):
\begin{equation}
r_i^{\mathbf{A}} = 1+\sum_{j=1;j\neq i}^n\frac{1}{2\sqrt{p_{ii}p_{jj}}}
(p_{ij}+p_{ji}) = 1+r_i \label{eq-mat-a-rip}
\end{equation}
where $r_i\equiv r_i^{\mathbf{R}}$ is the $i$-th entry of the matrix $\mathbf{R}$ 
row-sums vector.
\begin{small} 
\begin{figure}[!ht]
\begin{center}
\includegraphics[width=0.3\textwidth]{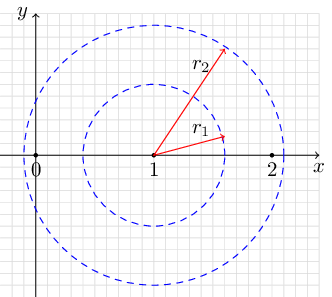}
\end{center}
\caption{Gershgorin circles}
\label{fig-gershgorin-circles}
\end{figure}
\end{small}

Figure \ref{fig-gershgorin-circles} shows the possible Gershgorin circles
for the matrix $\mathbf{A}$ where the off-diagonal entries sum may be less 
($r_1$) or greater ($r_2$) than $1$. From the Gershgorin circles theorem, 
\cite{Varga-2004},
all circles containing the eigenvalues of the matrix $\mathbf{A}$ have the 
same center, $1$, and these circles' radii depend on the off diagonal entries 
of $\mathbf{P}$, see relation (\ref{eq-mat-a-rip}). Thank to symmetry,
the eigenvalues of the matrices $\mathbf{B}$ and $\mathbf{A}$ are localized 
on the $x$-axis of this Figure. Therefore, the lower and upper bounds of the
matrix $\mathbf{B}$ eigenvalues are $thr_{min}=1-r$ and $thr_{max}=1+r$, 
respectively, where $r=\max(r_i)$.

Using the nonnegative property of the matrices $\mathbf{A}$ and $\mathbf{B}$,
which are assumed to be irreducible, their Perron roots allow to write:
$\mathbf{Ax} = \rho(\mathbf{A})\mathbf{x}$ and 
$\mathbf{By} = \rho(\mathbf{B})\mathbf{y}$, 
where $\mathbf{x}$ and $\mathbf{y}$ are positive-valued entries vectors.
Thank to symmetry, 
$\mathbf{y}^{\mathsf{T}}\mathbf{B} = \rho(\mathbf{B})\mathbf{y}^{\mathbf{T}}$.
These relations allow to write
$\mathbf{y}^{\mathsf{T}}\mathbf{Ax} = \rho(\mathbf{A})
\mathbf{y}^{\mathsf{T}}\mathbf{x}$ and 
$\mathbf{y}^{\mathsf{T}}\mathbf{Bx} = \rho(\mathbf{B})
\mathbf{y}^{\mathsf{T}}\mathbf{x}$. 
Setting $\mathbf{A} = \mathbf{B} + \mathbf{E}$, where $\mathbf{E}\geq 0$,
then $\rho(\mathbf{A}) = \rho(\mathbf{B}) + 
\mathbf{y}^{\mathsf{T}}\mathbf{Ex}/\mathbf{y}^{\mathsf{T}}\mathbf{x}$.
Therefore $\lambda_1=\rho(\mathbf{B})\leq \mu_1=\rho(\mathbf{A})$.
The Perron root of $\mathbf{B}$ is less or equal to that of $\mathbf{A}$.
The relation between all the eigenvalues of the matrices $\mathbf{B}$
and $\mathbf{A}$ is formulated in the following conjecture.
\begin{conjecture}
For the $n$-order nonnegative symmetric matrices $\mathbf{B}$ and 
$\mathbf{A}$ where $\mathbf{B}\leq \mathbf{A}$, 
$\sigma(\mathbf{B}) =\{\lambda_1,\lambda_2,\ldots,\lambda_n\}$ and 
$\sigma(\mathbf{A}) =\{\mu_1,\mu_2,\ldots,\mu_n\}$, then
\[|\lambda_i|\leq |\mu_i|\mbox{ ; }i=1,2,\ldots,n\]
\end{conjecture}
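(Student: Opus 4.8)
The plan is to establish the modulus inequality top-down, exploiting that the nonnegative symmetric structure is strongest at the Perron end of the spectrum. First I would dispatch the case $i=1$. Since $\mathbf{A}$ and $\mathbf{B}$ are nonnegative and symmetric, the Perron--Frobenius theorem forces each spectral radius to coincide with its largest eigenvalue, so $\lambda_1=\rho(\mathbf{B})$ and $\mu_1=\rho(\mathbf{A})$; monotonicity of the Perron root on the cone of entrywise-nonnegative matrices then gives $\rho(\mathbf{B})\leq\rho(\mathbf{A})$, which is exactly the inequality $\lambda_1=\rho(\mathbf{B})\leq\mu_1=\rho(\mathbf{A})$ already derived above the statement from $\rho(\mathbf{A})=\rho(\mathbf{B})+\mathbf{y}^{\mathsf{T}}\mathbf{E}\mathbf{x}/\mathbf{y}^{\mathsf{T}}\mathbf{x}$. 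As both quantities are nonnegative, this yields $|\lambda_1|\leq|\mu_1|$ and anchors the argument.

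For $i\geq 2$ I would write $\mathbf{A}=\mathbf{B}+\mathbf{E}$ with $\mathbf{E}=\mathbf{A}-\mathbf{B}\geq 0$ and attack the remaining eigenvalues through the Courant--Fischer min--max characterization, comparing the quadratic forms $\mathbf{x}^{\mathsf{T}}\mathbf{B}\mathbf{x}$ and $\mathbf{x}^{\mathsf{T}}\mathbf{A}\mathbf{x}=\mathbf{x}^{\mathsf{T}}\mathbf{B}\mathbf{x}+\mathbf{x}^{\mathsf{T}}\mathbf{E}\mathbf{x}$. Because the statement concerns moduli, the idea is to split the spectrum at zero and treat the two ends separately: bound the positive eigenvalues by driving the min--max test subspaces toward the nonnegative orthant, where $\mathbf{x}^{\mathsf{T}}\mathbf{E}\mathbf{x}\geq 0$ makes the comparison go the desired way, and bound the eigenvalues of largest negative part by the mirror-image argument run from the bottom of the spectrum, matching $\lambda_n,\lambda_{n-1},\ldots$ against $\mu_n,\mu_{n-1},\ldots$. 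Sylvester's law of inertia, already invoked in the previous subsection, would then reconcile the numbers of positive and negative eigenvalues of $\mathbf{A}$ and $\mathbf{B}$ so that, after re-sorting by modulus, the bound $|\lambda_i|\leq|\mu_i|$ can be read off index by index.

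The step I expect to be genuinely difficult --- and presumably the reason the authors state this as a conjecture rather than a result --- is precisely the descent from the Perron eigenvalue to the interior ones. The entrywise-nonnegative perturbation $\mathbf{E}$ is in general not positive semidefinite, so Weyl's monotonicity inequality is unavailable and the sign of $\mathbf{x}^{\mathsf{T}}\mathbf{E}\mathbf{x}$ cannot be pinned down once the optimal test vectors are allowed to change sign. The nonnegative-orthant restriction that rescues the top eigenvalue has no clean analogue for interior eigenvalues: orthogonality to the strictly positive Perron vector forces the later eigenvectors to oscillate in sign, so the inequality $\mathbf{x}^{\mathsf{T}}\mathbf{E}\mathbf{x}\geq 0$ is lost exactly where it is needed. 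I therefore expect the decisive obstacle to be extracting, from entrywise domination alone, enough control on the indefinite form $\mathbf{x}^{\mathsf{T}}\mathbf{E}\mathbf{x}$ over sign-changing subspaces to carry the min--max comparison through the whole spectrum; overcoming this is where the main effort of a complete proof would have to be concentrated.
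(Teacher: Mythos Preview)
The paper offers no proof of this statement: it is labelled a \emph{conjecture}, and the only argument the paper actually carries out is the Perron-root inequality $\lambda_1\le\mu_1$ just before the conjecture is stated. Your handling of the $i=1$ case reproduces exactly that argument, so on the one part the paper does prove, you agree with it.

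For $i\ge 2$ there is a genuine gap in your plan, and in fact no strategy can close it because the conjecture, as stated for arbitrary nonnegative symmetric matrices with $\mathbf{B}\le\mathbf{A}$ entrywise, is false. Take
\[
\mathbf{B}=\begin{pmatrix}1&0\\0&1\end{pmatrix},\qquad
\mathbf{A}=\begin{pmatrix}1&1\\1&1\end{pmatrix}.
\]
Both are nonnegative symmetric and $\mathbf{B}\le\mathbf{A}$; yet $\sigma(\mathbf{B})=\{1,1\}$ and $\sigma(\mathbf{A})=\{2,0\}$, so $|\lambda_2|=1>0=|\mu_2|$. This already shows that your appeal to Sylvester's law of inertia cannot be correct in this generality: Sylvester compares \emph{congruent} matrices, and entrywise domination $\mathbf{B}\le\mathbf{A}$ does not imply congruence (in the counterexample the inertias are $(2,0,0)$ and $(1,1,0)$, which differ). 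The paper's earlier use of Sylvester was legitimate only because its specific $\mathbf{A}$ and $\mathbf{B}$ are related by $\mathbf{A}=\mathbf{Q}^{-1/2}\mathbf{B}\mathbf{Q}^{-1/2}$; that structural link is lost once the conjecture is phrased for arbitrary $\mathbf{B}\le\mathbf{A}$. So the obstacle you identified for the Courant--Fischer step is real, and the Sylvester patch you proposed to organise the modulus comparison does not hold; the statement would need to be restricted (e.g.\ to the paper's congruent pair) before any proof attempt can succeed.
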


\subsection{Two-class problem}
For a binary problem and using relation (\ref{eq-mat-a-rip}), 
there is a unique $r$ value which expression is
$r=(p_{12}+p_{21})/2\sqrt{p_{11}p_{22}}$ and the eigenvalues of $\mathbf{A}$ 
are $\mu_1=1+r$ and $\mu_2=1-r$.

The evaluation measure in relation (\ref{eq-eve}) is for any classifier,
binary or multi-class. Explicit expressions are obtained for the eigenvalues 
of a binary problem. In this case, the matrices $\mathbf{P}$ and 
$\mathbf{B}$ are given as follows:
\begin{equation}
    \mathbf{P} =\left(\begin{array}{cc}
    p_{11} &p_{12}\\  p_{21} &p_{22}
    \end{array}\right) \mbox{ ; }
    \mathbf{B} =\left(\begin{array}{cc}
    p_{11} &(p_{12}+p_{21})/2\\
    (p_{12}+p_{21})/2 &p_{22}\end{array}\right)\label{eq-mat-p22-b22}
\end{equation}
The expressions for the two eigenvalues of $\mathbf{B}$ are:
\begin{eqnarray}
\lambda_1 &=& 0.5\left(p_{11}+p_{22} + 
    \sqrt{(p_{11}-p_{22})^2 + (p_{12}+p_{21})^2}\right) \label{eq-lambda1}\\
\lambda_2 &=& 0.5\left(p_{11}+p_{22} - 
    \sqrt{(p_{11}-p_{22})^2 + (p_{12}+p_{21})^2}\right) \label{eq-lambda2}
\end{eqnarray}
The eigenvalues are expressed using all the matrix $\mathbf{P}$ entries,
the true positive, the true negative, the false positive and the false 
negative rates. The first eigenvalue is always positive.
When $p_{11}=p_{22}=p$, then $\lambda_1=1$ and $\lambda_2=2p-1$
(we used $p_{12}=1-p_{22}$ and $p_{21}=1-p_{11}$). The second
eigenvalue will be null if $p=0.5$ and negative if $p<0.5$. 
When both $p_{11}$ and $p_{22}$ are greater than $0.5$, then $\lambda_1$ and 
$\lambda_2$ are positive-valued. Positive-valued eigenvalues are also 
obtained for other cases, e.g. $p_{11}=0.7$ and $p_{22}=0.4$. Given a value
for $p_{22}$ and solving relation (\ref{eq-lambda2}) for $\lambda_2=0$ allows
to have: $p_{11} = p_{22}+2-\sqrt{8p_{22}}$.
The (red) curve in figure \ref{fig-lambda2} corresponds to $\lambda_2=0$
and separates positive and negative $\lambda_2$ value locations when
$p_{11}$ and $p_{22}$ vary. A value $p_{jj}<0.5$ corresponds to an
incorrect prediction for more than half of the class $j$ observations.
\begin{small} 
\begin{figure}[!ht]
\begin{center}
\includegraphics[width=0.4\textwidth]{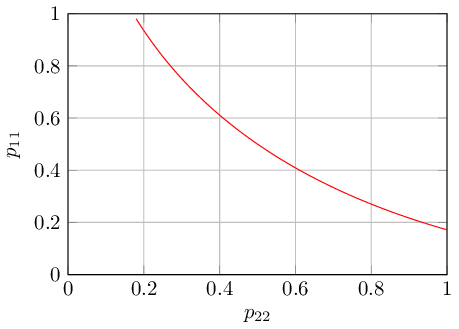} 
\end{center}
\caption{Behavior of $\lambda_2$ when $p_{11}$ and $p_{22}$ vary}
\label{fig-lambda2}
\end{figure}
\end{small}

\subsubsection{Relation between the eigenvalues and some binary problem 
measures}
The following relations show a direct link between the eigenvalues of a
binary problem and the sensitivity, the specificity, the area under the 
ROC curve and the Gini index (G or $gid$).
\begin{eqnarray}
\lambda_1 &=& auc + 0.5\sqrt{2(sen-1)^2 + 2(spe-1)^2}\label{eq-lambda-auc}\\
\lambda_2 &=& auc - 0.5\sqrt{2(sen-1)^2 + 2(spe-1)^2}\label{eq-lambda-auc2}\\
\lambda_1+\lambda_2 &=& 2 auc = gid+1\label{eq-lambda-auc-gini}\\
\lambda_1\lambda_2 &=& 2 auc - 1 - 0.25(sen-spe)^2\label{eq-eigen-prod}\\
    &=& gid - 0.25(sen-spe)^2\label{eq-eigen-prod2}
\end{eqnarray}
Relations (\ref{eq-lambda-auc}) and (\ref{eq-lambda-auc2}) 
show that when the sensitivity 
(the true positive rate) and the specificity (the true negative rate)
are both equal to 1, a perfect classifier, then 
$\lambda_1 = \lambda_2 = auc =1$. When the specificity goes
towards 0, then $\lambda_{1,2} = auc \pm (sen-1)$. A 
similar result is obtained when the sensitivity goes towards zero.
The half sum of the eigenvalues is the area under the ROC curve,
AUC $=(\lambda_1+\lambda_2)/2$. The sum of the eigenvalues is related 
to the Gini index, \cite{Hand-2009}, G = 2AUC - 1 $= \lambda_1+\lambda_2-1$.
On the other hand, 
the product of the eigenvalues is related to the AUC, the
true positive rate and the true negative rate. Since the 
first eigenvalue is always positive, the sign of the eigenvalues product 
depends on the AUC and the true positive and negative rates.
Therefore, the second eigenvalue is non-positive if:
\[ auc \leq 0.5 + 0.125(sen - spe)^2 \mbox{ or } gid\leq 0.25(sen-spe)^2\]
When $sen\approx spe$ in the above relation,
then $auc\approx 0.5$, a bad value. 
That means a non-positive $\lambda_2$ value corresponds to a bad
area under ROC curve and therefore to a worse classifier. The following two 
examples for $p_{11}$ and $p_{22}$ lead to a zero EVE value:
(a) $p_{11}=p_{22}=0.5$ (random assignment) and (b) $p_{11}=0.9$ and
$p_{22}=0.1$ (good and bad prediction results for the first and the second
class, respectively). For both cases, AUC=0.5 and the eigenvalue products 
are $0$ and $-0.16$, respectively. Hence, the EVE value is zero when the
second eigenvalue is negative, independently of the prediction quality
for both classes.

\section{Results and discussion}\label{sect-results}
To show the performance of the proposed measure, synthetic and
real-life binary
and multi-class problem datasets are used. For a binary problem, the results 
for the measures in relations (\ref{eq-sen}) - (\ref{eq-mcc}), 
the normalized mutual information (NMI), the CEN, its 
modification MCEN and the EVE measures are shown. 

The values for most of the measures used in the comparison vary in the 
interval $[0, 1]$. The NMI is computed as the ratio of the mutual entropy 
and the entropy associated with the prediction assignments and the true
class memberships, NMI $\in [0,1]$.
The meaning of CEN is opposite to that of the other measures and the 
expression $cen.s=1-CEN$ is used in the comparison, similarly for MCEN. 
The Matthews
correlation coefficient varies in the intervals $[-1,1]$ and the 
expression $mcc.s=(MCC+1)/2$ is used to have a value in $[0, 1]$. 
Therefore, a near one (zero) measure value corresponds to a good (bad) 
classifier. 

\subsection{Binary examples} 
The following six examples are used.
\begin{equation}
    \mathbf{M}_a = \left(\begin{array}{cc}
        15 &25\\ 15 &25
    \end{array}\right) \mbox{ ; }
    \mathbf{M}_b = \left(\begin{array}{cc}
        45 &5\\ 5 &45
    \end{array}\right) \mbox{ ; }
    \mathbf{M}_c = \left(\begin{array}{cc}
        5 &45\\  45 &5
    \end{array}\right) \nonumber\label{twoclass-exp-abc}
    \end{equation}
\begin{equation}
    \mathbf{M}_1 = \left(\begin{array}{cc}
        125 &30\\   15 &130
    \end{array}\right) \mbox{ ; }
    \mathbf{M}_2 = \left(\begin{array}{cc}
        9 &80\\  1 &210
    \end{array}\right) \mbox{ ; }
    \mathbf{M}_3 = \left(\begin{array}{cc}
        434 &7\\  10 &232
    \end{array}\right) \nonumber\label{eq-balanced}
\end{equation}
The transition matrices associated with $\mathbf{M}_a$ and $\mathbf{M}_b$ 
have the same stationary distribution vector 
$\mathbf{p}=(0.5,0.5)^{\mathsf{T}}$.
For $\mathbf{M}_a$, $50\%$ of the observations are correctly 
classified against $90\%$ for $\mathbf{M}_b$. The matrix
$\mathbf{M}_c$ corresponds to a bad prediction result. 
Near zero measure values are 
expected for matrices $\mathbf{M}_a$ and $\mathbf{M}_c$.
The matrices $\mathbf{M}_1$ and $\mathbf{M}_2$ correspond to a balanced and 
an imbalanced diagnostic tests involving $300$ individuals. The matrix
$\mathbf{M}_3$ corresponds to a Support Vector Machine classification
results for 
the breast cancer dataset, \cite{Mangasarian-al-1990}, which consists of $683$ 
instances, $65\%$ benign and $35\%$ malignant. The imbalanced ratios for 
$\mathbf{M}_2$ and $\mathbf{M}_3$ are $0.034$ and $0.538$, respectively.
The bounds and eigenvalues of the matrices $\mathbf{A}$ and $\mathbf{B}$ 
associated with these six examples are shown in Table \ref{tab-mam3-eigen}.

\begin{table}[!htb]
    \caption{Bounds and eigenvalues of $\mathbf{A}$ and $\mathbf{B}$}
        \label{tab-mam3-eigen}
    \begin{center}
    \begin{tabular}{rcccccc}
      & & &\multicolumn{2}{c}{$\mathbf{A}$}
      &\multicolumn{2}{c}{$\mathbf{B}$}\\
      matrix &$thr_{min}$ &$thr_{max}$ &$\mu_1$ &$\mu_2$
             &$\lambda_1$ &$\lambda_2$\\ \hline
    $\mathbf{M}_a$ &0.000 &2.000 &2.000 &0.000 &1.000 &0.000\\
    $\mathbf{M}_b$ &0.888 &1.111 &1.111 &0.888 &1.000 &0.800\\
    $\mathbf{M}_c$ &-8.00 &10.00 &10.00 &-8.00 &1.000 &-0.80\\
    $\mathbf{M}_1$ &0.827 &1.173 &1.173 &0.827 &1.005 &0.699\\
    $\mathbf{M}_2$ &0.767 &1.233 &1.233 &0.767 &1.019 &0.604\\
    $\mathbf{M}_3$ &0.973 &1.026 &1.026 &0.973 &1.000 &0.948\\ \hline 
    \end{tabular}
\end{center}
\end{table}
For these binary examples, the eigenvalues bounds are tight for a good 
classifier and a large eigenvalues bounds range, $\geq 2$, 
is associated with a bad classifier.

\begin{table}[!htb]
    \caption{Measure results for binary problem examples}
        \label{tab-twoclass-results}
    \begin{center}
    \begin{tabular}{rcccccccc}
     &$\mathbf{M}_a$ &$\mathbf{M}_b$ &$\mathbf{M}_c$ 
        &$\mathbf{M}_1$ &$\mathbf{M}_2$ &$\mathbf{M}_3$
        &$\tilde{\mathbf{M}}_2$ &$\tilde{\mathbf{M}}_3$\\ \hline
     sensitivity &0.500 &0.90 &0.10 &0.893 &0.900 &0.977 &0.626 &0.983\\
     specificity &0.500 &0.90 &0.10 &0.812 &0.724 &0.971 &\textbf{0.933} &0.960\\
     precision   &0.375 &0.90 &0.10 &0.806 &0.101 &0.984 &0.377 &0.978\\
     accuracy   &0.500 &0.90 &0.10 &0.850 &0.730 &0.975 &0.915 &0.975\\
     F1-score   &0.428 &0.90 &0.10 &0.847 &0.182 &0.980 &0.470 &0.981\\
     fmi        &0.433 &0.90 &0.10 &0.848 &0.302 &0.981 &0.486 &0.981\\
     auc           &0.500 &0.90 &0.10 &0.853 &0.812 &0.974 &0.779 &0.972\\ 
     $\kappa$   &\textbf{0.000} &0.800 &-0.8 &0.701 &0.129 &0.945 &0.423 &0.946\\
     mcc.s        &0.500 &0.90 &0.10 &0.852 &0.623 &0.973 &0.722 &0.973\\
     nmi    &\textbf{0.000} &0.361 &0.361 &0.249 &0.038 &0.699 &0.113 &0.700\\
     cen.s         &0.028 &0.57 &\textcolor{red}{-0.04}&0.452 &0.580 &0.844 &0.702 &0.845\\   
     mcen.s      &0.123 &0.55 &\textbf{0.00} &0.455 &0.638 &0.826 &0.687 &0.827\\  
     \textbf{eve}&\textbf{0.000} &\textbf{0.99} &\textbf{0.00} 
     		&\textbf{0.976} &\textbf{0.952} &\textbf{0.999} &0.912 
		&\textbf{0.999}\\ \hline
    \end{tabular}
    \end{center}
\end{table}
The Table \ref{tab-twoclass-results} shows evaluation measures values 
for the examples $\mathbf{M}_a$, $\mathbf{M}_b$,
$\mathbf{M}_c$, $\mathbf{M}_1$ and $\mathbf{M}_3$. The proposed 
measure has the better values for almost all examples. 
Undesirable negative value 
is associated with the CEN method for example $\mathbf{M}_c$. The method in 
\cite{Delgado-al-2019} allows to correct the CEN value for a binary problem.
For the balanced and imbalanced matrices $\mathbf{M}_1$ and $\mathbf{M}_2$
associated with a diagnostic test, 
the precision and the F1.score drop heavily ($87\%$ and $78\%$), while the 
accuracy, the AUC and EVE drop slightly ($14\%$, $5\%$ and $2.4\%$). 
The use of the
estimate confusion matrix $\tilde{\mathbf{M}}_2$, leads to an improve of
the precision, the F1-score and the Matthews' correlation coefficient. The
use of $\tilde{\mathbf{M}}_2$ leads to a small change in the EVE
value: $-4.2\%$. There is no significant change in the values when using
$\mathbf{M}_3$ or $\tilde{\mathbf{M}}_3$. The NMI allows to detect the
random classifier example $\mathbf{M}_a$. However, it makes no difference 
between the results for the examples $\mathbf{M}_b$ and $\mathbf{M}_c$ 
which have opposite quality. The NMI, CEN and MCEN have relatively small
values compared to the other measures for these examples.

\subsection{Multi-class examples} 
For illustration purpose, the following CM examples are used.
\begin{equation}
    \mathbf{M}_4 = \left(\begin{array}{ccc}
        50 &0 &0\\  0 &35 &7\\  0 &15 &43
    \end{array}\right) \mbox{ ; }
    \mathbf{M}_5 = \left(\begin{array}{ccc}
        48 &28 &19\\  5 &42 &23\\  14 &9 &44
    \end{array}\right) \nonumber
\end{equation}
\begin{equation}
\mathbf{M}_6 = \left(\begin{array}{ccccc}
17 &2   &0   &0 &0\\
28 &127 &0   &0 &0\\
16 &0   &122 &6 &0\\
6  &0   &4   &\mathbf{3} &0\\
0  &0   &0   &0 &127
\end{array}\right)\mbox{ ; }
\mathbf{M}_7 = \left(\begin{array}{ccccc}
17 &2   &0   &0 &0\\
28 &127 &0   &\mathbf{3} &0\\
16 &0   &122 &6 &0\\
6  &0   &4   &0 &0\\
0  &0   &0   &0 &127
\end{array}\right) \nonumber
\end{equation} 
$\mathbf{M}_4$
is a classifier result for the iris dataset, \cite{Fisher-1936}, 
which has one well separated class and 
two overlapped classes. The matrix $\mathbf{M}_5$ has complex-valued eigenvalues 
and an imbalanced ratio $ir\approx 0.8$. The matrix $\mathbf{M}_6$ corresponds to a 
forest images dataset
cited in \cite{Jupp-1989}, and $\mathbf{M}_7$ is its modification 
(see the entry in bold for class 4). 
The matrices $\mathbf{M}_6$ and $\mathbf{M}_7$ have
the same imbalanced ratio, $ir\approx 0.07$. 
The bounds and eigenvalues of the matrices $\mathbf{A}$ and $\mathbf{B}$ 
associated with these four examples are shown in Table \ref{tab-m4m7-eigen}.

\begin{table}[!htb]
    \caption{Bounds and eigenvalues of $\mathbf{A}$ and $\mathbf{B}$}
        \label{tab-m4m7-eigen}
    \begin{center}
    \begin{tabular}{rcccc}
        &$\mathbf{M}_4$ &$\mathbf{M}_5$ &$\mathbf{M}_6$ &$\mathbf{M}_7+\mathbf{J}_5/5$\\ \hline
    $thr_{min}$  &0.716 &0.279 &0.144 &-3.361\\
    $thr_{max}$  &1.283 &1.721 &1.855 &5.361\\
    $\mu_1$    &1.283 &1.712 &1.765 &3.871\\
    $\mu_2$    &1.000 &0.654 &1.322 &1.179\\
    $\mu_3$    &0.716 &0.633 &1.000 &0.999\\
    $\mu_4$    & -    & -    &0.541 &0.597\\
    $\mu_5$    & -    & -    &0.371 &-1.65\\
    $\lambda_1$&1.014 &1.018 &1.149 &1.150\\
    $\lambda_2$&1.000 &0.411 &1.033 &0.994\\
    $\lambda_3$&0.545 &0.330 &1.000 &0.987\\
    $\lambda_4$& -    & -    &0.185 &0.181\\
    $\lambda_5$& -    & -    &0.171 &-0.104\\ \hline 
    \end{tabular}
\end{center}
\end{table}
In Table \ref{tab-m4m7-eigen}, a modification of $\mathbf{M}_7$ is used
because of the zero diagonal entry for class 4.
The EVE values for $\mathbf{M}_7$ and $\mathbf{M}_7+\mathbf{J}_5/5$ are
$0.77604$ and $0.77539$, respectively.
For the multi-class examples, the eigenvalues bounds are tight for a good 
classifier like for a binary problem.

\subsubsection{Direct use of the CM entries}
The measures in relation (\ref{eq-acc}) - (\ref{eq-mcc}), the normalized
mutual information, the confusion entropy, the modified confusion entropy
measures and that proposed
are used. The results obtained are shown in Table \ref{tab-multiclass-results-a}.

\begin{table}[!htb]
    \caption{Measure results for multi-class problem examples}
        \label{tab-multiclass-results-a}
    \begin{center}
    \begin{tabular}{rccccccc}
     &$\mathbf{M}_4$ &$\mathbf{M}_5$ &$\mathbf{M}_6$ &$\mathbf{M}_7$
     &$\tilde{\mathbf{M}}_5$ &$\tilde{\mathbf{M}}_6$ &$\tilde{\mathbf{M}}_7$\\ \hline
     acc        &0.853 &0.577 &0.865 &0.858 &0.584 &0.818 &0.798\\
     $\kappa$&0.780&0.371 &0.816 &0.806 &0.379 &0.756 &0.730\\
     mcc.s     &0.892 &0.689 &\textbf{0.912} &\textbf{0.908} &0.691 
     		&\textbf{0.887} &\textbf{0.875}\\
    nmi         &0.523 &0.079 &0.629 &0.618 &0.081 &0.592 &0.552\\       
     cen.s      &0.774 &0.354 &0.861 &0.852 &0.352 &0.847 &0.822\\    
    mcen.s      &0.697 &0.237 &0.799 &0.784 &0.231 &0.794 &0.726\\
    \textbf{eve}&\textbf{0.968} &\textbf{0.883} &0.859 &\textcolor{blue}{0.776} 
    		&\textbf{0.889} &0.756 &0.761\\ \hline
    \end{tabular}
    \end{center}
\end{table}
For the balanced matrix $\mathbf{M}_4$ good results are obtained for all measures
in the comparison. Small values are obtained for the matrix $\mathbf{M}_5$, except for
the proposed measure. The matrix $\mathbf{M}_5$
has relatively large off-diagonal entries but it still leads to a
diagonally dominant matrix $\mathbf{P}$. The estimate CM leads
to a small increase measure
values for the matrix $\tilde{\mathbf{M}}_5$ and a small decrease for the matrices
$\tilde{\mathbf{M}}_6$ and $\tilde{\mathbf{M}}_7$. 
The Matthews correlation coefficient is the better for examples 
$\mathbf{M}_6$ and $\mathbf{M}_7$.
Only the proposed measure shows
clearly the modification between $\mathbf{M}_6$ and $\mathbf{M}_7$. 
For these two matrices,
there is a drop of $9.7\%$ for the proposed measure value compared to near $1\%$
for the other measures in the comparison.

\subsubsection{Conversion into a single binary problem}
The entries of a binary CM are obtained from those of multi-class
CM by using the expressions in relations
(\ref{eq-multi2two}) and (\ref{eq-multi2two2}).
\begin{table}[!htb]
    \caption{Multi-class problem converted into a single binary problem}
        \label{tab-multiclass-results-b}
    \begin{center}
    \begin{tabular}{rccccccc}
     &$\mathbf{M}_4$ &$\mathbf{M}_5$ &$\mathbf{M}_6$ &$\mathbf{M}_7$
     &$\tilde{\mathbf{M}}_5$ &$\tilde{\mathbf{M}}_6$ &$\tilde{\mathbf{M}}_7$\\ \hline
     sen	&0.775 &0.433 &0.912 &0.912 &0.435 &0.910 &0.893\\
     spe	&0.881 &\textbf{0.707} &0.918 &0.912 &$\mathbf{0.713}$ &0.878 &0.862\\
     pre	&0.762 &0.426 &0.789 &0.778 &0.431 &0.694 &0.654\\
     acc      &0.846 &0.616 &0.916 &0.912 &0.621 &0.886 &0.869\\
  f1-s		&0.768 &0.429 &0.846 &0.840 &0.433 &0.787 &0.755\\
  fmi   &0.768 &0.429 &0.848 &0.842 &0.433 &0.795 &0.765\\
    auc	&0.828 &0.570 &0.915 &0.912 &0.574 &0.894 &0.878\\
$\kappa$  &0.653 &0.140 &0.789 &0.779 &0.149 &0.712 &0.669\\
     mcc.s   &0.827 &0.570 &0.896 &0.892 &0.574 &0.862 &0.842\\
     nmi     &0.207  &0.008 &0.371 &0.359 &0.009 &0.291 &0.251\\
     cen.s   &0.445  &0.117 &0.645 &0.635 &0.122 &0.584 &0.549\\
    mcen.s   &0.697  &0.237 &0.799 &0.784 &0.183 &0.584 &0.554\\
\textbf{eve}&\textbf{0.966} &0.483 &\textbf{0.994} &\textbf{0.993} 
		&0.501 &\textbf{0.989} &\textbf{0.986}\\ \hline
    \end{tabular}
    \end{center}
\end{table}

Table \ref{tab-multiclass-results-b} shows the results obtained with the 
same evaluation measures used in Table \ref{tab-twoclass-results}. 
The proposed measure has the better value except for the example 
$\mathbf{M}_5$ which has relatively large off-diagonal entries. For the examples
used, when the multi-class example is converted into a single binary problem, 
more large values are obtained for the specificity, the accuracy, the AUC, the 
MCC and the  precision.

\subsubsection{Conversion into many binary problems}
Each class of a multi-class problem is compared to the others taken together
leading to  an analysis of a $2\times 2$ CM for each class. That typically leads to
an imbalanced classification problem. The results obtained are shown in Tables 
\ref{tab-multiclass-results-c4} - \ref{tab-multiclass-results-c7}.
\begin{table}[!htb]
    \caption{Each class versus the others}
        \label{tab-multiclass-results-c4}
    \begin{center}
    \begin{tabular}{lrcccccc}
        & &\multicolumn{3}{c}{non modified CM} &\multicolumn{3}{c}{Modified CM}\\
    $\mathbf{M}_4$  &$j$ &1 &2 &3 &1 &2 &3\\ \hline
    &pre &\textbf{1.0}  &0.833 &0.741 &\textbf{1.0}  &0.876 &0.802\\
    &f1-s  &\textbf{1.0} &0.781 &0.796 &\textbf{1.0}  &0.728 &0.807\\
    &fmi &\textbf{1.0} &0.764 &0.798 &\textbf{1.0}  &0.738 &0.807\\
	&auc &\textbf{1.0} &0.815 &0.855 &\textbf{1.0}  &0.786 &0.851\\
	&acc &\textbf{1.0} &0.853 &0.853 &\textbf{1.0}  &0.830 &0.862\\
&$\kappa$ &\textbf{1.0}  &0.656 &0.682 &\textbf{1.0}  &0.609 &0.699\\
&mcc.s 	&\textbf{1.0}  &0.831 &0.844 &\textbf{1.0}  &0.814 &0.850\\
&nmi    &\textbf{1.0} &0.218 &0.238 &\textbf{1.0}  &0.198 &0.246\\
&cen.s   &\textbf{1.0}  &0.479 &0.470 &\textbf{1.0}  &0.465 &0.742\\
&mcen.s  &\textbf{1.0}  &0.481 &0.473 &\textbf{1.0}  &0.484 &0.472\\
&\textbf{eve}& \textbf{1.0} &\textbf{0.948} &\textbf{0.979}
&\textbf{1.0}  &\textbf{0.913}  & \textbf{0.976}\\  \hline               
    \end{tabular}
    \end{center}
\end{table}

For the example $\mathbf{M}_4$, Table \ref{tab-multiclass-results-c4},
all measures show the well separation of the first class from the other two.
The proposed measure shows better result when using the CM or its estimate.
Using the estimate CM leads to a decrease for the EVE value and an 
improvement for many other measures in the comparison.
 
\begin{table}[!htb]
    \caption{Each class versus the others}
        \label{tab-multiclass-results-c5}
    \begin{center}
    \begin{tabular}{lrcccccc}
        & &\multicolumn{3}{c}{non modified CM} &\multicolumn{3}{c}{Modified CM}\\
$\mathbf{M}_5$  &$j$ &1 &2 &3 &1 &2 &3\\ \hline
    &pre &0.505 &0.600 &0.657 &0.616 &0.676 &\textbf{0.713}\\
    &f1-s&0.592 &0.564 &0.575 &0.616 &0.539 &0.549\\
    &fmi  &0.602 &0.565 &0.579 &0.616 &0.551 &0.564\\
     &auc &0.716 &0.674 &0.677 &0.707 &0.665 &0.660\\
     &acc &0.715 &0.719 &0.720 &0.735 &0.700 &0.698\\
    &$\kappa$&0.384 &0.358 &0.371 &0.414 &0.330 &0.339\\
     &mcc.s &0.699 &0.680 &0.689 &0.707 &0.673 &0.680\\
     &nmi   &0.066 &0.053 &0.059 &0.071 &0.050 &0.055\\
     &cen.s &0.257 &0.237 &0.248 &0.247 &0.249 &0.261\\
     &mcen.s&0.307 &0.275 &0.291 &0.283 &0.302 &0.319\\
     &\textbf{eve} &\textbf{0.883} &\textbf{0.789} &\textbf{0.781}
     &\textbf{0.860} &\textbf{0.709} &\textbf{0.713}\\  \hline                
    \end{tabular}
    \end{center}
\end{table}

For the example $\mathbf{M}_5$, Table \ref{tab-multiclass-results-c5}, like for the
example $\mathbf{M}_4$, the proposed measure has the better results when using CM
or its estimate. Again, there is an improvement for some measures when using
the estimate CM but not for EVE which value decreases.

\begin{table}[!htb]
    \caption{Each class versus the others}
        \label{tab-multiclass-results-c6}
    \begin{center}
    \begin{tabular}{lrcccccccccc}
        & &\multicolumn{5}{c}{non modified CM} &\multicolumn{5}{c}{Modified CM}\\
    $\mathbf{M}_6$  &$j$ &1 &2 &3 &4 &5 &1 &2 &3 &4 &5\\ \hline
   	&pre &0.895 &0.819 &0.847 &0.231 &\textbf{1.0} 
        &0.953 &0.878 &0.900 &0.679 &\textbf{1.0}\\
    &f1-s&0.395 &0.894 &0.904 &0.272 &\textbf{1.0} 
        &0.218 &0.924 &0.924 &0.120 &\textbf{1.0}\\
    &fmi &0.476 &0.898 &0.906 &0.277 &\textbf{1.0} 
        &0.343 &0.926 &0.924 &0.212 &\textbf{1.0}\\
	&auc &0.624 &0.949 &0.951 &0.655 &\textbf{1.0} 
        &0.560 &0.960 &0.953 &0.531 &\textbf{1.0}\\
	&acc &0.886 &0.934 &0.943 &0.965 &\textbf{1.0} 
        &0.769 &0.953 &0.956 &0.909 &\textbf{1.0}\\
    &$\kappa$&0.353 &0.847 &0.864 &0.255 &\textbf{1.0} 
        &0.169 &0.891 &0.893 &0.106 &\textbf{1.0}\\
	&mcc &0.720 &0.927 &0.933 &0.630 &\textbf{1.0} 
        &0.647 &0.947 &0.947 &0.596 &\textbf{1.0}\\
    &nmi &\textbf{0.118} &0.494 &0.506 &\textbf{0.056} &\textbf{1.0} 
        &0.057 &0.563 &0.552 &0.027 &\textbf{1.0}\\
&cen.s   &0.705 &0.727 &0.737 &0.857 &\textbf{1.0} 
    &0.613 &0.772 &0.766 &0.775 &\textbf{1.0}\\
&mcen.s  &0.711 &0.719 &0.724 &0.839 &\textbf{1.0} 
    &0.661 &0.756 &0.747 &0.768 &\textbf{1.0}\\
 &\textbf{eve} &0.393 &\textbf{0.997} &\textbf{0.998} &0.585 &\textbf{1.0}
 &\textbf{0.000} &\textbf{0.998} &\textbf{0.998} &\textbf{0.000} &\textbf{1.0}\\  \hline                
    \end{tabular}
    \end{center}
\end{table}
\begin{table}[!htb]
    \caption{Each class versus the others}
        \label{tab-multiclass-results-c7}
    \begin{center}
    \begin{tabular}{lrcccccccccc}
        & &\multicolumn{5}{c}{non modified CM} &\multicolumn{5}{c}{Modified CM}\\
    $\mathbf{M}_7$  &$j$ &1 &2 &3 &4 &5 &1 &2 &3 &4 &5\\ \hline
   	&pre &0.895 &0.804 &0.847 &\textbf{0.000} &\textbf{1.0} 
        &0.953 &0.867 &0.900 &0.242 &\textbf{1.0}\\
    &f1s&0.395 &0.885 &0.904 &\textcolor{red}{NA} &\textbf{1.0} 
        &0.218 &0.918 &0.924 &0.015 &\textbf{1.0}\\
    &fmi &0.476 &0.889 &0.906 &0.000 &\textbf{1.0} 
        &0.343 &0.919 &0.924 &0.043 &\textbf{1.0}\\
	&auc &0.624 &0.945 &0.951 &0.489 &\textbf{1.0} 
        &0.560 &0.957 &0.953 &0.502 &\textbf{1.0}\\
	&acc &0.886 &0.928 &0.943 &0.958 &\textbf{1.0} 
        &0.769 &0.949 &0.956 &0.871 &\textbf{1.0}\\
&$\kappa$&0.353 &0.833 &0.864 &\textcolor{red}{-0.02} &\textbf{1.0} 
        &0.169 &0.882 &0.893 &0.007 &\textbf{1.0}\\
	&mcc.s &0.720 &0.920 &0.933 &0.489 &\textbf{1.0} 
        &0.647 &0.942 &0.947 &0.511 &\textbf{1.0}\\
    &nmi &\textbf{0.118} &0.472 &0.506 &0.002 &\textbf{1.0} 
        &0.057 &0.544 &0.552 &\textbf{0.000} &\textbf{1.0}\\
&cen.s   &0.705 &0.711 &0.737 &0.843 &\textbf{1.0} 
    &0.613 &0.758 &0.766 &0.728 &\textbf{1.0}\\
&mcen.s  &0.711 &0.706 &0.724 &0.820 &\textbf{1.0} 
        &0.661 &0.744 &0.747 &0.728 &\textbf{1.0}\\
 &\textbf{eve} &0.393 &\textbf{0.996} &\textbf{0.998} 
 		&\textbf{0.000} &\textbf{1.0} 
    &\textbf{0.000} &\textbf{0.998} &\textbf{0.998} &\textbf{0.000} &\textbf{1.0}\\  \hline                
    \end{tabular}
    \end{center}
\end{table}

For the examples $\mathbf{M}_6$ and $\mathbf{M}_7$, 
Tables \ref{tab-multiclass-results-c6} and \ref{tab-multiclass-results-c7}, 
respectively, all measures show the well separation of the fifth class 
compared to the others. In the example $\mathbf{M}_6$, 
more than half of the observations in classes 1 and 4 are wrongly predicted. 
The diagonal entry for class 4 is empty for the example 
$\mathbf{M}_7$. When using the estimate CM, there is an improvement for
all measures. The measures values are relatively small for classes 1 and 4.
The change in class 4 which transform $\mathbf{M}_6$ into $\mathbf{M}_7$
is detected by all measures except CEN and MCEN which values vary slightly.
Unlike for the examples $\mathbf{M}_4$ and $\mathbf{M}_5$, the use of the estimate
CM does not lead to a decrease of the EVE value.

\subsubsection{MNIST dataset}
The MNIST dataset consists of images for handwritten digits, 0, 1, \ldots 10.
There are a training set of 60,000 images and test set of 10,000 images. Each 
observation is a grayscale image of size 28 x 28 leading to a vector of size 
784 which values are from the set $\{0, 1, \ldots, 256\}$. 
Labels information are available for all images. Here, the linear discriminant 
algorithm (LDA) is applied to the training set to obtain a model 
parameter matrix of size $784\times 10$. This model is used with the test set 
to obtain a prediction assignment matrix of size $10,000\times 10$. The entries
of each row (vector) of this matrix give proximity to the $10$ digits. 
The maximum vector entry of the LDA model is used to form the confusion matrix 
$\mathbf{M}_8$.
To obtain the confusion matrix $\mathbf{M}_9$, the LDA model vector entries 
for each test image are normalized to have sum one, membership values.
Then, these membership values are added for each class to obtain the matrix 
$\mathbf{M}_9$.
\[\mathbf{M}_8 = \left(\begin{array}{cccccccccc}
    939 &0   &17  &5   &1   &23  &17  &5   &14  &17\\
    1   &1106&58  &17  &21  &16  &9   &40  &52  &11\\
    2   &2   &820 &21  &6   &4   &11  &17  &9   &4\\
    3   &2   &26  &887 &1   &90  &0   &8   &30  &16\\
    1   &1   &16  &2   &873 &17  &19  &19  &32  &66\\
    11  &1   &0   &12  &5   &616 &18  &1   &44  &0\\
    13  &5   &34  &12  &10  &23  &874 &2   &17  &1\\
    1   &2   &17  &24  &1   &15  &0   &872 &11  &74\\
    8   &16  &39  &18  &12  &62  &10  &5   &741 &12\\
    1   &0   &5   &12  &52  &26  &0   &59  &24  &808            
\end{array}\right)\]

\[\mathbf{M}_9 = \left(\begin{array}{cccccccccc}
    358.72 &28.22 &77.91 &78.47 &43.83 &87.26 &76.82 &61.75 &65.44 &48.65\\
    39.85 &559.51 &91.05 &87.34 &60.11 &58.08 &62.44 &62.06 &90.47 &60.28\\
    73.24 &82.13 &304.03 &92.69 &59.87 &49.34 &88.77 &81.59 &75.25 &53.07\\
    76.46 &89.64 &106.79 &302.24 &57.14 &99.99 &51.23 &81.44 &81.89 &65.30\\
    49.64 &45.77 &64.79 &55.22 &318.28 &66.97 &86.53 &64.93 &81.98 &141.22\\
    90.80 &47.62 &55.54 &109.30 &70.59 &231.95 &73.20 &58.43 &108.59 &64.39\\
    86.12 &57.39 &92.38 &55.86  &82.26 &67.64 &336.46 &47.78 &69.91 &61.94\\
    67.03 &58.02 &75.59 &78.50 &67.47 &58.59 &50.64 &366.61 &51.59 &130.04\\
    78.81 &104.32 &98.75 &83.29 &85.08 &111.45 &66.51 &59.52 &272.41 &83.02\\
59.33 &62.38 &65.16 &67.07 &137.37 &60.72 &65.39 &143.88 &76.47 &301.09  
\end{array}\right)\]

The eigenvalues and their bounds are shown in Table \ref{tab-multiclass-mnist}.
Both $\mathbf{M}_8$ and $\mathbf{M}_9$ lead to a positive
define stochastic matrix 
$\mathbf{P}$. However, the eigenvalues bounds for $\mathbf{M}_8$ are tighter
than those of $\mathbf{M}_9$.
\begin{table}[!htb]
    \caption{MNIST results}
        \label{tab-multiclass-mnist}
    \begin{center}
    \begin{tabular}{rcc}  \hline   
    $\mathbf{M}_8$ &$thr_{min}$=0.731 
        &$\sigma(\mathbf{A})=\{$1.198, 1.074, 1.025, 1.013, 0.999, 0.985, 0.966, 0.946, 0.899, 0.893$\}$\\
    &$thr_{max}$=1.269 
        &$\sigma(\mathbf{B})=\{$1.019, 0.965, 0.933, 0.908, 0.893, 0.852, 0.785, 0.763, 0.738, 0.649$\}$\\ \hline 
    $\mathbf{M}_9$ &$thr_{min}$=-1.469 
        &$\sigma(\mathbf{A})=\{$3.119, 1.115, 0.967, 0.913, 0.884, 0.738, 0.672, 0.652, 0.488, 0.451$\}$\\
    &$thr_{max}$=3.469  
        &$\sigma(\mathbf{B})=\{$1.001, 0.439, 0.359, 0.311, 0.273, 0.255, 0.213, 0.194, 0.150, 0.128$\}$\\ \hline
    \end{tabular}
    \end{center}
\end{table}

The measures based on a direct use of the matrix $\mathbf{P}$ entries are shown in
Table \ref{tab-multiclass-mnist2}. The EVE measure has again the better value.
When using the real-valued (soft assignment) CM, the EVE value drops by $8.4\%$
while the other measures, in the comparison drop by $31\%$ to $90\%$.
\begin{table}[!htb]
    \caption{MNIST results}
        \label{tab-multiclass-mnist2}
    \begin{center}
    \begin{tabular}{rcccccc} 
        &acc    &$\kappa$ &mcc.s &nmi &cen.s &\textbf{eve}\\  \hline   
        $\mathbf{M}_8$ &0.854 &0.837 &0.919 &0.555 &0.784&\textbf{0.996}\\ 
        $\mathbf{M}_9$ &0.335 &0.261 &0.630 &0.054 &0.251&\textbf{0.912}\\   
        drop     &$60.8\%$ &$68.8\%$&$31.4\%$&$90.3\%$&$67.9\%$ &$8.4\%$\\  \hline   
    \end{tabular}
    \end{center}
\end{table}

The imbalanced ratio for the examples $\mathbf{M}_8$ and $\mathbf{M}_9$
is $ir=0.786$ and the use of the estimate confusion matrices 
$\tilde{\mathbf{M}}_8$ and $\tilde{\mathbf{M}}_9$ does not lead to
a significant difference compare to the results reported in
Table \ref{tab-multiclass-mnist2}.
For reproducibility of the results presented a publicly R package,
eve, is available.

\section{Conclusion}\label{sect-conclusion}
While many evaluation measures consist of a direct use of the confusion 
matrix entries,
the proposed method necessitates extra computation to get a matrix 
$\mathbf{P}$ and the eigenvalues of a symmetric matrix $\mathbf{B}$.
The eigenvalue decomposition is the heavy computational part of the proposed 
measure.  From the results of Abel and Galois, there is no explicit expressions
for the eigenvalues when $n>4$. However, many tools are available
for computing the eigenvalues even for a large $n$ value 
\cite{Golub-al-1996,Bjorck-2015}. 
Explicit relations are provided for the eigenvalue of a binary 
classification problem.
A drawback of the EVE measure for a binary problem is it makes no difference
between a random assignment for the two classes and the results where one 
class is wrongly predicted and the other is correctly predicted, presence of
negative eigenvalue. Since it is shown that the sum of the eigenvalues for
a binary problem is related to the AUC and the Gini index, a negative 
eigenvalue will also lead to a small value for these two measures.

The proposed evaluation measure is based on a linear algebra 
analysis of the confusion matrix (CM) unlike other measures 
based on the statistical behavior of the classes associated with 
the CM. However, for a binary problem, a link has been shown 
between the eigenvalues used and the 
specificity, the sensitivity, the AUC and the Gini index. 
The comparison results presented show a better performance 
for the proposed measure for balanced and imbalanced examples. 
An expression allowing to compute lower and upper bounds for the 
eigenvalues is provided as a function of the CM entries. The 
examples results show that tighter bounds are associated with 
a good classifier. An extension of this work can be a
statistical characterization of the eigenvalues.

An expression allowing to estimate the CM from that observed 
is also provided. 
This estimate CM leads to an improvement for some commonly 
used measures when the imbalanced ratio is small, $<0.1$ for 
the tested examples. Here,
classification problem examples are used for the illustration. 
The proposed measure also apply to two clustering methods 
results or for comparing two raters results, where the number $k$ 
of clusters/categories is the same.
The EVE measure may be a good candidate for evaluating a
classification method or for comparing two clustering methods,
evaluating the agreement of two raters.

\subsubsection*{Acknowledgements.} 
The work of D. Demb\'el\'e is supported by CNRS.

\subsubsection*{Funding resources.}
This research did not receive any specific grant from funding agencies
in public, commercial or not-a-profit sectors.

\bibliographystyle{abbrv}
\bibliography{/Users/doulayedembele/Documents/bib/biblio_af,
/Users/doulayedembele/Documents/bib/biblio_go,
/Users/doulayedembele/Documents/bib/biblio_ps,
/Users/doulayedembele/Documents/bib/biblio_tz}

\end{document}